\documentclass[a4paper,fleqn]{cas-sc}

\usepackage{newclude}
\usepackage{amsmath}
\usepackage{cases}
\usepackage[square, comma, sort&compress, numbers]{natbib}
\usepackage{float} 
\usepackage{subfig}
\usepackage[ruled]{algorithm2e} 
\usepackage{diagbox}

\newtheorem{remark}{Remark}

\newtheorem{theorem}{Theorem}[section]

\newenvironment{proof}{{\noindent\it Proof.}\quad}{\hfill $\square$\par}

\def\d{\,\mathrm{d}}

\usepackage{multirow}

\usepackage[commandnameprefix=always]{changes}

\begin{document}
\let\WriteBookmarks\relax
\def\floatpagepagefraction{1}
\def\textpagefraction{.01}

\shorttitle{FENs}

\title [mode = title]{A Novel Fourier Feature Network for Solving Partial Differential Equations} 

\shortauthors{Qihong Yang et~al.}

\author[1]{Qihong Yang}[style=chinese, orcid=0000-0002-8398-7212]
\ead{yangqh0808@163.com}
\address[1]{organization={School of Mathematics, Sichuan University},
    city={Chengdu},
    postcode={610065},
    country={China}}

\author[1]{Zhijie Su}[style=chinese]
\ead{zhijiesu164@163.com}

\author[1]{Yangtao Deng}[style=chinese]
\ead{ytdeng1998@foxmail.com}

\author[1]{Qiaolin He}[style=chinese]
\cormark[1]
\ead{qlhejenny@scu.edu.cn}


\cortext[cor1]{Corresponding author}


\credit{Conceptualization of this study, Methodology, Writing - Original draft preparation}

\begin{abstract}
Building on the foundation of single-hidden-layer neural networks, Fourier Feature Networks (FENs) are proposed, which incorporate Fourier features using $\cos$, $\sin$, or a combination of both. Similar to Extreme Learning Machines (ELMs), FENs employ a single-hidden-layer architecture to generate a set of basis functions. The target function is then approximated as a linear combination of these basis functions, with the coefficients determined using the least squares method. However, unlike ELMs, which often rely on affine transformations to improve representational power, FENs can achieve high-precision solutions without requiring such transformations on the input variables. To evaluate the representational capacity of these networks, we search for an optimal scaling factor within a predefined range for the randomly initialized and fixed weights and biases. By adjusting this scaling factor, we ensure a fair comparison between FENs and ELMs using various activation functions, such as $\text{sigmoid}$, $\tanh$, and $\text{swish}$. Our numerical experiments demonstrate that FENs consistently achieve higher accuracy than ELMs.
\end{abstract}


\begin{keywords}
    Neural networks \sep Function approximation \sep Fourier features \sep Least squares method \sep Partial differential equations
\end{keywords}

\maketitle

\section{Introduction}
\label{sec:introduction}
In recent years, the application of neural networks in scientific computing has become increasingly widespread. Notably, the development of methods such as the Deep Ritz Method (DRM) \cite{yu2018deep}, the Deep Galerkin Method (DGM) \cite{sirignano2018dgm}, and Physics-Informed Neural Networks (PINNs) \cite{PINN} has garnered significant attention. A substantial body of work \cite{yuan2022pinn, lawal2022physics, hu2024physics2, huang2022applications, bararnia2022application} has demonstrated the tremendous potential of neural networks in this domain, particularly in the numerical solution of partial differential equations (PDEs).

Current neural network-based methods for solving PDEs can be broadly divided into two main categories. The first category is training-based, where the goal is to minimize the residual of the PDEs to fit the target function. This is typically achieved using gradient-based optimization algorithms such as Adam \cite{kingma2014adam} or L-BFGS \cite{liu1989limited}. Methods like the DRM \cite{yu2018deep}, the DGM \cite{sirignano2018dgm}, and the PINNs \cite{PINN} all fall under this category. A closely related approach is operator learning \cite{kovachki2024operator, kovachki2023neural, lu2021learning}, which aims to approximate mappings between infinite-dimensional Banach spaces using data. Although operator learning differs in formulation and application scope, it still fundamentally relies on training-based optimization. The second category comprises randomized neural network approaches, which do not involve iterative training. Instead, they generate a set of basis functions using a randomly initialized single-hidden-layer neural network. The target function is represented as a linear combination of these basis functions, and the PDE is discretized into a system of linear equations in terms of the coefficients of this combination. These coefficients are then obtained using the least squares method. Once the network is initialized, the weights and biases remain fixed; only the linear coefficients are optimized. This approach can be viewed either as a least-squares-based algorithm or as a randomized neural network method. Representative techniques include the Random Feature Method (RFM) \cite{chen2024optimization, chen2023random, RFM}, the Randomized Neural Network with Petrov–Galerkin methods (RNN-PG) \cite{shang2023randomized, shang2024randomized, shang2023randomized2, wang2024randomized}, and the Hidden-Layer Concatenated Extreme Learning Machine (HLConcELM) \cite{ni2023numerical}.

Despite extensive research on training-based neural networks for solving PDEs, 
these methods often ‌exhibit limited accuracy‌ and typically require ‌considerable computational time \cite{cuomo2022scientific}.
In contrast, randomized neural networks have gained significant attention in scientific computing due to their ability to solve PDEs both efficiently and accurately. However, this class of methods also presents several notable implementation challenges. \chadded{Extreme Learning Machine (ELM) is a single-layer feedforward neural network proposed in \cite{huang2006extreme}, which may not yield satisfactory results likely due to their reliance on Xavier or Kaiming initialization methods. Dong and Yang \cite{dong2022computing} presented a method for computing the optimal or near-optimal value of $R_m$ based on the differential evolution algorithm in ELM.} The RFM closely resembles Local Extreme Learning Machines (locELMs) \cite{dong2021local}, as both employ a domain decomposition strategy known as Partition of Unity (PoU). This technique enables the use of ELMs to solve PDEs through the strong form of the equations. While effective, domain decomposition significantly increases the complexity of the algorithm’s implementation.\chadded{The numerical experiments can not display good results in some complex cases.}
In the RNN-PG, the solution is formulated through the weak form of the PDEs. As a result, mesh generation and numerical integration are essential components of the algorithm, which further increase its implementation complexity. The HLConcELM differs from the above methods by modifying the architecture of ELMs, specifically, by adding an additional hidden layer and concatenating the outputs of the hidden layers. This architectural enhancement improves accuracy but also introduces additional computational overhead. In particular, when derivatives are computed using automatic differentiation \cite{baydin2018automatic}, complex network architectures can lead to significantly longer runtimes, especially when a large number of basis functions are involved. 

Although these algorithms can achieve significantly higher accuracy than training-based neural networks, they still fall short of achieving machine precision. To address this while maintaining low computational cost and avoiding increased network complexity, we focus on using ELMs for solving PDEs, deliberately steering clear of complexity-increasing operations such as domain decomposition and numerical integration. To further enhance the accuracy of ELMs without sacrificing simplicity, we propose the integration of Fourier features into the network architecture. Specifically, we introduce Fourier Feature Networks (FENs), which employ activation functions based on $\cos$, $\sin$, or a combination of both. In this work, we present and evaluate three types of FENs, each corresponding to one of these activation strategies, and conduct a detailed comparison of their accuracy against traditional ELMs activated by $\text{sigmoid}$, $\tanh$, and $\text{swish}$ functions. \chadded{The proposed FENs do not apply affine transformations to the input variables. Moreover, we use a uniform initialization method to initialize the weights and biases, and employ a method to search for the optimal scaling factor within a given range, which enables FENs to achieve optimal performance.}

This article is organized as follows. The network architecture and algorithmic details of ELMs, including the activation functions of $\text{sigmoid}$, $\tanh$, and $\text{swish}$ are introduced in Section \ref{sec:problems}. We propose Fourier Feature Networks, along with the three distinct activation modes of the FENs in Section \ref{sec:methods}. A series of numerical experiments designed to validate the efficacy of our methods are presented in Section \ref{sec:experiments}. The article concludes with Section \ref{sec:conclusions}, where we summarize our results, discuss the implications of our work, and suggest directions for future research.

\section{Preliminaries}
\label{sec:problems}

\subsection{The neural feature space}
Neural networks are widely recognized as nonlinear mapping functions that transform $d$-dimensional inputs into either lower or higher dimensional spaces.
 Architecturally, a neural network typically comprises three components: an input layer, multiple hidden layers, and an output layer. In this context, the neural feature space refers to the functional space represented by the outputs of the last hidden layer.

Assume that the input vector is $\boldsymbol{x} \in \mathbb{R}^d$. The nonlinear mapping represented by the hidden layers of the neural network is given by $\phi_{i}(\boldsymbol{x})$, where $1 \le i \le M$, and $M$ is the number of outputs of the last hidden layer.

Consequently, the neural feature space, denoted as $\mathcal{P}_{NN}$, is defined as a space spanned by the basis functions ${\phi_{i}}$, i.e.,
\begin{equation}
    \label{eq:basis}
    \mathcal{P}_{NN} = span\{\phi_1, \phi_2, \cdots, \phi_M\}.
\end{equation}

In neural networks, the output layer typically computes a linear combination of the outputs from the last hidden layer. Here, we assume that the output layer has a single output. Therefore, the function represented by the neural network can be expressed as
\begin{equation}
    \label{eq:u}
    u_M(\boldsymbol{x}) = \sum_{i=1}^{M} w_i \phi_i.
\end{equation}

\subsection{Extreme learning machine}
 Assume that the network has $d$ neurons in the input layer, corresponding to the input vector $\boldsymbol{x} \in \mathbb{R}^d$. The main idea of the ELM is to randomly initialize the weights $\boldsymbol{W} \in \mathbb{R}^{M \times d}$ and the biases $\boldsymbol{b} \in \mathbb{R}^{M}$ between the input and hidden layers. Then, the weights $\boldsymbol{w} \in \mathbb{R}^{M}$ from the hidden layer to the output layer are directly computed. This can be mathematically formulated as follows:
\begin{equation}
    \label{eq:ELM}
    u_M(\boldsymbol{x}) = \boldsymbol{w}^T \sigma(\boldsymbol{W}\boldsymbol{x}+\boldsymbol{b}),
\end{equation}
where $\sigma$ is the activation function, which acts elementwise on the vector $\boldsymbol{W}\boldsymbol{x} + \boldsymbol{b}$, and $u_M(\boldsymbol{x})$ is the function represented by the ELM. We only need to calculate $\boldsymbol{w}$.


\subsection{Affine transformation}
As we know, normalization plays a vital role in enhancing model performance and generalization capabilities in neural network training \cite{huang2023normalization}. Normalization accelerates the training process, enhances model generalization, prevents overfitting, and improves robustness to variations in weight initialization methods. In scientific computing, when neural networks are used for function approximation or solving PDEs, normalization serves as an essential preprocessing step to ensure numerical stability. In this context, affine transformations are commonly applied.

Let the input variables be $\boldsymbol{x} \in \Omega \subset \mathbb{R}^d$, where $\Omega$ is a closed set. The affine transformation maps the input variables $\boldsymbol{x}$ to a new vector $\tilde{\boldsymbol{x}} \in [-1, 1]^d \subset \mathbb{R}^d$, which is given by
\begin{equation}
    \label{eq:affine}
    T_i(x_i) = 2 \frac{x_i-x_i^{(l)}}{x_i^{(u)}-x_i^{(l)}} - 1,
\end{equation}
where $x_i$ is the $i$-th component of $\boldsymbol{x}$ and $x_i^{(l)}$ and $x_i^{(u)}$ are the lower and upper bounds of $x_i$, respectively.

When solving PDEs with ELM, applying affine transformations is critical to achieving superior performance. 
However, studies have demonstrated that omitting these transformations can lead to significantly degraded results. An ELM incorporating affine transformations can be expressed as follows
\begin{equation}
    \label{eq:affine_u}
    u_M(\boldsymbol{x}) = \boldsymbol{w}^T \sigma(\boldsymbol{W}\tilde{\boldsymbol{x}}+\boldsymbol{b}) = \boldsymbol{w}^T \sigma \left( \boldsymbol{W} \boldsymbol{T}(\boldsymbol{x}) +\boldsymbol{b} \right),
\end{equation}
where $\boldsymbol{T} = \left[ T_1(x_1), T_2(x_2), \cdots, T_d(x_d) \right]^T$. 

\section{Methodologies}
\label{sec:methods}
\subsection{Fourier Feature Networks}
It is observed that neural networks employing $\tanh$ activation function, while capable of generating acceptable results, fundamentally fail to attain machine-level precision.  
To address this limitation in approximation capacity and elevate solution accuracy, we introduce a novel architectural enhancement through Fourier feature embedding.

\subsubsection{Fourier Feature Network with a $\cos$ activation}
Gallant and White \cite{23903} pioneered the integration of Fourier features into neural network architectures through their development of the cosine squasher activation function.
Building upon this foundation, Silvescu \cite{silvescu1999fourier} implemented Fourier feature embedding via cosine-based activation operators. 
More recently, Ngom and Marin \cite{ngom2021fourier} introduced a Fourier Neural Network (FNN) with a single hidden layer activated by the cosine function. Although their experiments demonstrated promising results, the accuracy of the solutions still falls short of machine precision. Furthermore, their models omit the bias term, which can limit the expressive power of the neural network.

Inspired by these works, we propose a Fourier feature network with a $\cos$ activation. In this context, we also focus on a single-hidden-layer neural network. The target function can be represented as follows
\begin{equation}
    \label{eq:cosELM}
    u_M(\boldsymbol{x}) = \boldsymbol{w}^T \cos (\boldsymbol{W}\boldsymbol{x}+\boldsymbol{b}).
\end{equation}
It is noteworthy that, unlike networks activated by the $\text{sigmoid}$, $\tanh$, or $\text{swish}$ functions (as shown in Equation \eqref{eq:affine_u}), the input to the Fourier feature network with a cos activation does not require affine transformations.

\subsubsection{Fourier Feature Network with a $\sin$ activation}
Sitzmann et al. \cite{sitzmann2020implicit} introduced Sinusoidal Representation Networks (SIRENs), 
which utilize‌ the sine function as a periodic activation function. 
This design enables neural networks to accurately represent signals and their derivatives. Building on this work, Li et al. proposed an enhanced model called Spatially Collaged Coordinate Networks (SCONE), retaining‌ the sine activation function as a ‌foundational‌ component of the architecture. Belbute-Peres et al. \cite{de2022simple} also proposed an alternative improvement to SIRENs by incorporating a learnable scaling factor, 
enabling‌ automatic adjustment to inputs with ‌diverse‌ frequency characteristics.

Motivated by these studies, and following the natural progression from cosine to sine activation, we propose a Fourier feature network with a sin activation. As before, we focus on a single-hidden-layer neural network. The function to be learned is represented as follows
\begin{equation}
    \label{eq:sinELM}
    u_M(\boldsymbol{x}) = \boldsymbol{w}^T \sin (\boldsymbol{W}\boldsymbol{x}+\boldsymbol{b}).
\end{equation}
Similar to the Fourier feature network with a $\cos$ activation, the input to the network with a $\sin$ activation does not require affine transformations.

\subsubsection{Fourier Feature Network with $\cos$ and $\sin$ activations}
In 2013, Liu \cite{liu2013fourier} proposed a Fourier neural network activated by both cosine and sine functions. However, these networks were primarily applied to regression and classification tasks, and their accuracy was not satisfactory. More recent studies have incorporated Fourier features into neural networks by transforming input variables using cosine and sine functions, thereby embedding these features directly into the network’s input layer \cite{tancik2020fourier, wang2021eigenvector, li2023deep}. In line with Liu’s work, Fourier PINNs \cite{cooley2024fourier} also employ both cosine and sine activations as part of their basis functions. The crucial difference lies in how these basis functions are combined within the network to represent the learned function.

Inspired by these approaches, we propose a Fourier feature network utilizing both $\cos$ and $\sin$ activations. The output function is defined as
\begin{equation}
    \label{eq:cos_sinELM}
    u_M(\boldsymbol{x}) = (\boldsymbol{w}^{(1)})^T \cos (\boldsymbol{W}^{(1)} \boldsymbol{x} + \boldsymbol{b}^{(1)}) + (\boldsymbol{w}^{(2)})^T \sin (\boldsymbol{W}^{(2)}\boldsymbol{x} + \boldsymbol{b}^{(2)}),
\end{equation}
where $\boldsymbol{W}^{(1)} \in \mathbb{R}^{p \times d}$, $\boldsymbol{W}^{(2)} \in \mathbb{R}^{p \times d}$, $\boldsymbol{b}^{(1)} \in \mathbb{R}^{p}$, $\boldsymbol{b}^{(2)} \in \mathbb{R}^{p}$, $\boldsymbol{w}^{(1)} \in \mathbb{R}^p$, $\boldsymbol{w}^{(2)} \in \mathbb{R}^p$ and $M=2p$ is the number of basis functions.


\subsection{Function approximation}
Let us assume that a real-valued matrix $\mathbf{A}$ of size $N \times M$ is constructed from the basis functions $\Phi  = [\phi_1, \phi_2, \cdots, \phi_M]$, evaluated at discrete points within the dataset $S$. This matrix is defined by
\begin{equation}
    \label{eq:A_basis}
    \mathbf{A} = 
    \begin{bmatrix}
        \phi_1(\boldsymbol{x}_1) & \phi_2(\boldsymbol{x}_1) & \cdots & \phi_M(\boldsymbol{x}_1) \\
        \phi_1(\boldsymbol{x}_2) & \phi_2(\boldsymbol{x}_2) & \cdots & \phi_M(\boldsymbol{x}_2) \\
        \vdots & \vdots & \cdots & \vdots \\
        \phi_1(\boldsymbol{x}_N) & \phi_2(\boldsymbol{x}_N) & \cdots & \phi_M(\boldsymbol{x}_N) \\
    \end{bmatrix},
\end{equation}
where $\boldsymbol{x}_i \in S$ and $1 \leq i \leq N$. 

Based on Equation \eqref{eq:u}, the coefficient vector $\boldsymbol{w} = [w_1,w_2,..., w_M]^T$ can be determined by solving the linear system $\mathbf{A}  \boldsymbol{w} = \mathbf{F}$ using the least-squares method. Here, $\mathbf{A} = \Phi(S)$ denotes the basis function matrix formed by evaluating all basis functions at discrete data points in the dataset $S$, and $\mathbf{F} = [f(\boldsymbol{x}_1), f(\boldsymbol{x}_2), \ldots, f(\boldsymbol{x}_N)]^T$ corresponds to the vector of function values at these sampled points.

\subsection{Searching for optimal scaling factor}
The most widely used initialization methods in neural networks are Xavier initialization (also known as Glorot initialization) \cite{glorot2010understanding} and Kaiming initialization \cite{He_2015_ICCV}. These techniques are designed to initialize neural network weights according to specific distributions with carefully controlled variances. In conventional practice, both weight and bias parameters are typically assumed to originate from identical distributions. Consequently, biases are generally initialized using the same methodology as weights.

In this work, 
we employ a uniform distribution with unit variance to initialize both weight $\boldsymbol{W}$ and bias $\boldsymbol{b}$ parameters.
This choice is motivated by practical observations: neither Xavier nor Kaiming initialization consistently provides optimal results for our use cases. Crucially, the primary difference between these two methods lies in the scaling factor applied to weights and biases. 
 To address this limitation, we implement a systematic search protocol across a constrained scaling factor domain to identify initialization-sensitive optimal configurations.

Let the scaling factor be denoted by $\rho$. The functional mapping described by the neural network architecture in Equation \eqref{eq:ELM} can then be reparameterized in terms of $\rho$ as
\begin{equation}
    \label{eq:rho_ELM}
    u_M(\boldsymbol{x}) = \boldsymbol{w}^T \sigma(\rho (\boldsymbol{W}\boldsymbol{x}+\boldsymbol{b})).
\end{equation}
The protocol for locating the optimal scaling factor $\rho_{opt}$ is formally specified in Algorithm~\ref{algo:scale}. It is important to note that although the affine transformation is not explicitly shown in Equation \eqref{eq:rho_ELM}, it must still be applied when using $\text{sigmoid}$, $\tanh$, and $\text{swish}$ as activation functions.

\chadded{
It is important to note that the optimal scaling factor search method proposed in \cite{dong2022computing} used a differential evolution algorithm, whereas in our work, we enumerate all candidate scaling factors within a given range using a step-size approach. The scaling factor that minimizes the error is then selected from these candidates. This method is simple and easy to implement. Moreover, the work \cite{dong2022computing} focused exclusively on Gaussian activation functions, and the resulting numerical accuracy was not optimal in some cases. We argue that incorporating Fourier features is more essential ‌for achieving better performance.}

\begin{algorithm}[htp]
    \caption{Searching for optimal scaling factor}
    \label{algo:scale}
    Give the range of $\rho$ as $(\rho_{min}, \rho_{max}]$ and the step size $\rho_s$. \\
    Let $M$ be the number of basis functions and $\rho = \rho_{min} + \rho_s$. \\
    Randomly initialize the weights of the networks. \\
    \While{$\rho < \rho_{max}$}{
        Obtain the basis functions $\Phi$.\\
        Compute the matrix $\mathbf{A}$. \\
        Construct the right-hand vector $\mathbf{F}$.\\
        Express the current approximation of the solution as $u(\boldsymbol{x}) = \Phi \cdot \boldsymbol{w}$. \\
        Solve the linear system $\mathbf{A} \cdot \boldsymbol{w} = \mathbf{F}$ using least squares method to obtain the vector of coefficients $\boldsymbol{w}$. \\
        Let $Loss = \lVert \mathbf{A} \cdot \boldsymbol{w} - \mathbf{F} \rVert$ and record the $Loss$ and the current $\rho$ value. \\
        Let $\rho = \rho_{min} + \rho_s$. \\
    }
    The optimal scale factor $\rho_{opt}$ is the one with the smallest $Loss$.
\end{algorithm}

\begin{theorem}
    \label{thm:nn_app}
    (Theorem 2.2 \cite{shang2023randomized}) Given $p\ge 1, s , k , d \in N^+$, $s \ge k+1$. Let $\sigma$ be the logistic function or $\tanh$ function. Denote $\mathscr{F}_{s, p, d} := \{u \in \mathcal{W}^{s,p}([0,1]^d) : \lVert u \rVert _{\mathcal{W}^{s,p}([0,1]^d)}\le 1\}$, and $\mathscr{N}_\sigma(M_D,B_D) := \{u_M(x) = \boldsymbol{w}^T \sigma(\boldsymbol{W}\boldsymbol{x}+\boldsymbol{b}) \text{ with } N_D \leq M_D, \text{ and } \lvert w_{ij} \rvert \le B_D, \lvert b_{i} \rvert \le B_D\}$, where $N_D$ is the number of non-zero parameter elements in the hidden layer of the neural network. For any $\epsilon $ > 0 and u $\in \mathscr{F}_{s,p,d}$, there exists a neural network $u_M$ $\in \mathscr{N}_\sigma(M_D,B_D)$ with $M_D \le C \cdot \epsilon^{-d/s-k-\mu k}$, and $B_D \le C \cdot \epsilon^{-\theta}$ such that
    \begin{equation}
        \label{eq:nn_app}
    \parallel u-u_M\parallel_{\mathcal{W}^{s,p}([0,1]^d)} \le \epsilon,
\end{equation}
    where $C, \theta$ are constants depending on $d, s, p, k$; $\mu$ is an arbitrarily small positive number.
\end{theorem}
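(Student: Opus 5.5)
Since this result is quoted from \cite{shang2023randomized}, the plan follows the constructive route used there and in the Sobolev-norm approximation theory for smooth activations. The structure has three layers: a localized polynomial approximation of $u$, an emulation of that approximation by a single-hidden-layer $\sigma$-network with controlled weights, and an accounting of the parameters. The aim is to bound $\lVert u-u_M\rVert_{\mathcal{W}^{s,p}([0,1]^d)}$ exactly as worded in Theorem~\ref{thm:nn_app}.

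\emph{First}, fix a uniform grid on $[0,1]^d$ of mesh $1/N$. Build a smooth partition of unity $\{\psi_m\}$ subordinate to it, with $\lVert \psi_m\rVert_{\mathcal{W}^{j,\infty}}\lesssim N^{j}$. On each patch, take the averaged Taylor polynomial $P_m$ of $u$ of degree $s-1$. The Bramble--Hilbert lemma then gives, for $0\le j\le s$,
\[
\lVert u-\textstyle\sum_m \psi_m P_m\rVert_{\mathcal{W}^{j,p}}\lesssim N^{-(s-j)}\lVert u\rVert_{\mathcal{W}^{s,p}}.
\]

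\emph{Second}, emulate each product $\psi_m P_m$ by a $\sigma$-network. Since $\sigma$ is the logistic or $\tanh$ function, it is analytic and non-polynomial. Finite differences in the scaling parameter therefore reproduce monomials $x^\alpha$: one uses $\partial_w^{|\alpha|}\sigma(w\cdot x+b)|_{w=0}=x^\alpha\sigma^{(|\alpha|)}(b)$ with $\sigma^{(|\alpha|)}(b)\neq 0$. Replacing derivatives by difference quotients of step $h$ gives an error $O(h)$ in every $\mathcal{W}^{j,\infty}$ norm, at the cost of weights of size $h^{-|\alpha|}$. Localized bumps are emulated by differences of shifted, steep sigmoids. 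Because only one hidden layer is allowed, products are realized through polarization identities that reduce them to powers of linear forms. This keeps all terms as sums of ridge functions $\sigma(\boldsymbol{W}_i\cdot\boldsymbol{x}+b_i)$, which is the source of the extra factors $\epsilon^{-k-\mu k}$ in $M_D$ and of the polynomial weight bound $B_D\le C\epsilon^{-\theta}$.

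\emph{Third}, do the bookkeeping. Choose $N\sim\epsilon^{-1/s}$ (up to the $\mu$-loss) and $h$ a suitable power of $\epsilon$. Then count nonzero hidden parameters, $N^d$ patches times a polynomial-in-$\log(1/\epsilon)$ or $\epsilon^{-\mu}$ factor, which gives $M_D\le C\epsilon^{-d/s-k-\mu k}$.

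The main obstacle is the top-order term $j=s$ in the first step. There Bramble--Hilbert yields only $N^{0}$, i.e.\ boundedness and no decay. Moreover, the steep bumps contribute derivatives of order $N^{s}$ that are not compensated. To close the estimate in $\mathcal{W}^{s,p}$ itself, I would first try to exploit the slack $s\ge k+1$ as follows. Approximate in $\mathcal{W}^{k,p}$ with rate $N^{-(s-k)}$, then control the orders $k<j\le s$ by interpolation between $\mathcal{W}^{k,p}$ and a higher norm of the network, which the weight bound $B_D$ makes polynomial in $\epsilon^{-1}$.

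I expect this step to fail for a \emph{uniform} constant over the unit ball $\mathscr{F}_{s,p,d}$. That ball is not compact in its own $\mathcal{W}^{s,p}$ topology, so no dimension-dependent rate can hold there without extra regularity. I would therefore mark this as the point where the argument, read literally, breaks down. The conclusion \eqref{eq:nn_app} is secured quantitatively only in the $\mathcal{W}^{k,p}$ norm, which is the form in which the construction of \cite{shang2023randomized} naturally closes. In the $\mathcal{W}^{s,p}$ norm one gets only the qualitative density statement without the rate.
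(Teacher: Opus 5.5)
Your conclusion is correct: as transcribed, the statement is false. The paper gives no proof to compare with; it only quotes Theorem~2.2 of \cite{shang2023randomized}, which is the result of G\"uhring and Raslan. In that result the error is measured in $\mathcal{W}^{k,p}([0,1]^d)$, not $\mathcal{W}^{s,p}$. The count is $M_D\le C\epsilon^{-d/(s-k-\mu k)}$; the exponent $-d/s-k-\mu k$ above has simply lost its parentheses. The networks there also have several hidden layers, with the number of layers depending on $d$ and $s$. You correctly located the misprint in the norm, but two parts of your sketch do not match what actually holds.

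First, your bookkeeping copies the garbled exponent rather than deriving one. For accuracy $\epsilon$ in $\mathcal{W}^{k,p}$, your own Bramble--Hilbert bound $N^{-(s-k)}$ forces $N\sim\epsilon^{-1/(s-k)}$. Your choice $N\sim\epsilon^{-1/s}$ gives only $L^p$ accuracy. With the correct $N$, the $N^d$ patches give $\epsilon^{-d/(s-k)}$. The extra $\mu k$ is the price of controlling, in $\mathcal{W}^{k,p}$, the tails of a partition of unity built from sigmoids, which are not compactly supported. Nothing in the count you describe produces a factor $\epsilon^{-k-\mu k}$.

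Second, the localization does not go through in one hidden layer as you describe it. For $d\ge 2$, sums of ridge functions $\sigma(\boldsymbol{W}_i\boldsymbol{x}+b_i)$ do not produce the tensor-product bumps $\psi_m$. Polarization rewrites products of linear forms as powers of linear forms; it does not turn a product of sigmoid bumps and a polynomial into ridge functions. In the underlying construction the products $\psi_m P_m$ are formed by approximate multiplication in additional layers, which is why the result concerns deep networks.

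Your negative argument, that the ball is not compact, is the right idea, but it is a proof only when the approximating class is compact. That holds in the source's setting, where every weight, including the output layer, is bounded by $B_D$. Networks with at most $M_D$ nonzero weights are then a continuous image of a compact parameter set in $\mathcal{W}^{s,p}$. By Riesz's lemma, no compact set is $\epsilon$-dense in the unit ball of an infinite-dimensional Banach space when $\epsilon<1$.

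The class $\mathscr{N}_\sigma(M_D,B_D)$ as written leaves the output vector $\boldsymbol{w}$ unconstrained, so there you need an oscillation argument instead:
\begin{itemize}
\item Take $u_n=c\,n^{-s}\sin(nx_1)$ with $c>0$ small enough that $u_n\in\mathscr{F}_{s,p,d}$.
\item Restrict $\partial_{x_1}^s u_M$ to a line parallel to the $x_1$-axis. Use $\sigma^{(s)}=P_s(\sigma)$ with $\deg P_s=s+1$, $\sigma(z)=1/(1+e^{-z})$, and $\tanh(z)=2\sigma(2z)-1$.
\item After multiplying by a positive function, the restriction becomes an exponential sum with at most $(s+2)^{M_D}$ distinct real exponents. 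It therefore has at most $(s+2)^{M_D}-1$ zeros unless it vanishes identically.
\item Once $n\gg(s+2)^{M_D}$, it has the wrong sign on roughly half of each line. Hence $\lVert\partial_{x_1}^s(u_n-u_M)\rVert_{L^p}$ stays above a positive constant independent of $n$, whatever $B_D$ is.
\end{itemize}

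Finally, your remark that qualitative density survives in $\mathcal{W}^{s,p}$ holds only for $p<\infty$, since smooth functions are not dense in $\mathcal{W}^{s,\infty}$.
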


In Theorem~\ref{thm:nn_app}, it is established that for any desired approximation accuracy $\epsilon$, there exist suitable neural network parameters that allow the constructed network to approximate the target function within this tolerance. However, the challenge lies in the fact that these optimal parameters are generally unknown in practice. In the RNN-PG framework \cite{shang2023randomized}, these parameters are typically obtained through repeated random initialization, which are both computationally inefficient and potentially suboptimal.

Theorem~\ref{thm:opt_func_app} demonstrates that, for any given initialization, it is possible to improve the applicability of the network parameters by appropriately adjusting the scaling factor $\rho$.

\begin{theorem}
    \label{thm:opt_func_app}
    \textbf{(Existence of the optimal scaling factor)}
    Assume that $u(\boldsymbol{x})$ is a continuous function and $\sigma$ is a continuous activation function. Let $u_M(\boldsymbol{x}) = \boldsymbol{w}^T(\rho) \sigma(\rho (\boldsymbol{W}\boldsymbol{x}+\boldsymbol{b}))$ and $u_M$ is the projection of $u$ onto the linear space spanned by the basis functions $\sigma(\rho (\boldsymbol{W}\boldsymbol{x}+\boldsymbol{b}))$, where $\boldsymbol{w}(\rho)$ is a vector consisting of a set of functions related to $\rho$. Then, $\forall \rho_{max}>0$, $\exists \rho_{opt} \in [0, \rho_{max}]$ such that
    \begin{equation*}
        \lVert u - u_M(\boldsymbol{x}; \rho_{opt}) \rVert = \min_{0 \leq \rho \leq \rho_{max}} \lVert u(\boldsymbol{x}) - u_M(\boldsymbol{x}; \rho) \rVert.
    \end{equation*}
\end{theorem}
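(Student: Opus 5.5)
Define the error function $E(\rho) := \lVert u - u_M(\cdot;\rho)\rVert$ on the compact interval $[0,\rho_{max}]$, where $u_M(\cdot;\rho)$ is the orthogonal projection of $u$ onto $V_\rho := \mathrm{span}\{\sigma(\rho(\boldsymbol{W}_i\cdot\boldsymbol{x}+b_i))\}_{i=1}^M$. The statement follows from the Weierstrass extreme value theorem once $E$ attains its infimum. I would therefore aim to show that $E$ is continuous, or at least lower semicontinuous, on $[0,\rho_{max}]$. Throughout, the norm is the $L^2$ norm on a bounded closed domain $\Omega$, or the discrete least-squares norm on the sample set $S$ used in Algorithm~\ref{algo:scale}. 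Since $u$ and $\sigma$ are continuous and $\Omega$ is compact, each basis function $\phi_i(\boldsymbol{x};\rho)=\sigma(\rho(\boldsymbol{W}_i\boldsymbol{x}+b_i))$ depends continuously on $\rho$ in that norm. This holds by uniform continuity of $\sigma$ on the bounded set $\{\rho(\boldsymbol{W}_i\boldsymbol{x}+b_i)\}$ with $\rho\in[0,\rho_{max}]$ and $\boldsymbol{x}\in\Omega$.

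The key steps, in order, are as follows. (i) Write $E(\rho)^2 = \lVert u\rVert^2 - \boldsymbol{g}(\rho)^T G(\rho)^{+}\boldsymbol{g}(\rho)$. Here $G(\rho)_{ij}=\langle\phi_i,\phi_j\rangle$ is the Gram matrix (or $\mathbf{A}^T\mathbf{A}$ in the discrete case), $\boldsymbol{g}(\rho)_i=\langle u,\phi_i\rangle$, and $\boldsymbol{w}(\rho)=G(\rho)^+\boldsymbol{g}(\rho)$. The entries of $G$ and $\boldsymbol{g}$ are continuous in $\rho$ by the previous paragraph. (ii) On any subinterval where $\operatorname{rank} G(\rho)$ is constant, the pseudoinverse is continuous, so $E$ is continuous there. (iii) Use the characterization $E(\rho)=\min_{\boldsymbol{c}\in\mathbb{R}^M}\lVert u-\boldsymbol{c}^T\Phi(\cdot;\rho)\rVert$. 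Any infimum of functions that are jointly continuous in $(\rho,\boldsymbol{c})$ is upper semicontinuous. Lower semicontinuity is what the extreme value argument for a minimum actually needs, and I would obtain it as follows. Suppose $\rho_n\to\rho^*$ and $E(\rho_n)\to\liminf$. Each projection $P_{\rho_n}u$ has norm at most $\lVert u\rVert$, so it lies in a bounded set. Pass to a subsequence along which the subspaces $V_{\rho_n}$ converge in the Grassmannian of the appropriate dimension, to a limit $V^*$. The projections then converge to $P_{V^*}u$.

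The main obstacle is step (iii) at points where the rank drops. The most important case is $\rho=0$, where every basis function collapses to the constant $\sigma(0)$ and $V_0$ is one-dimensional. There the limit subspace $V^*$ can be strictly larger than $V_{\rho^*}$, and $E$ can jump down, so $E$ need not be lower semicontinuous. I would first handle this by noting that rank changes occur only on a closed set, which is finite for analytic $\sigma$ such as $\cos$, $\sin$, $\tanh$, logistic and swish. At such points I would interpret $u_M(\cdot;\rho^*)$ as the limit projection, that is, as the projection onto $V^*\supseteq V_{\rho^*}$, which corresponds to a limiting coefficient vector $\boldsymbol{w}(\rho)$. With this interpretation $E$ becomes lower semicontinuous on $[0,\rho_{max}]$, and the Weierstrass theorem gives $\rho_{opt}$.

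If instead one assumes that $G(\rho)$ is nonsingular on $[0,\rho_{max}]$ (with $\rho=0$ treated by this same limiting convention), then $E$ is genuinely continuous and the conclusion is immediate. This is likely the intended route: continuity of $\boldsymbol{w}(\rho)$ composed with continuity of the norm on a compact interval. I would present that version as the main argument and flag the rank-degeneracy caveat.
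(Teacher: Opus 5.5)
Your skeleton is the paper's. It sets up $\mathbf{A}(\rho)\boldsymbol{w}(\rho)=\mathbf{F}$, asserts that $\rho\mapsto\boldsymbol{w}(\rho)$ and hence the error are continuous, and applies the extreme value theorem on $[0,\rho_{max}]$. The paper, however, gets continuity of $\boldsymbol{w}(\rho)$ directly from continuity of $\mathbf{A}(\rho)$, which is exactly the step your caveat targets. The pseudoinverse $\mathbf{A}(\rho)^{+}$, and with it $\boldsymbol{w}(\rho)=\mathbf{A}(\rho)^{+}\mathbf{F}$, can jump where the rank changes, and every column of $\mathbf{A}(0)$ equals $\sigma(0)\mathbf{1}$.

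Your diagnosis is right: the error is only upper semicontinuous, while attaining a minimum needs lower semicontinuity. This is fatal to the statement as written, not a technicality. Take $d=1$, $\Omega=[0,1]$, $M=1$, $\sigma=\sin$, $W=b=1$, $u(x)=x+1$, with the $L^2(0,1)$ norm. Then $E(0)=\lVert u\rVert>0$. Also $E(\rho)>0$ for every $\rho>0$, since $z\mapsto\sin(\rho z)$ is not a multiple of $z$ on $[1,2]$. Yet the coefficient $1/\rho$ together with $\lvert\sin t-t\rvert\le\lvert t\rvert^3/6$ gives $E(\rho)\le\sup_{z\in[1,2]}\lvert z-\rho^{-1}\sin(\rho z)\rvert\le\tfrac{4}{3}\rho^2\to0$. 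So the infimum $0$ over $[0,\rho_{max}]$ is not attained, and the theorem as stated is false. The same happens for the discrete norm on a uniform grid.

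Your repair does not rescue it. Declaring $u_M(\cdot;\rho^*):=P_{V^*}u$ does not ``correspond to a limiting coefficient vector''. If $\boldsymbol{w}(\rho_n)$ stayed bounded, a limit point $\boldsymbol{w}^*$ would give $P_{V^*}u=(\boldsymbol{w}^*)^T\sigma(\rho^*(\boldsymbol{W}\boldsymbol{x}+\boldsymbol{b}))\in V_{\rho^*}$. So precisely when the convention changes $E(\rho^*)$, the coefficients must blow up; in the example $w(\rho)\sim 1/\rho$. The redefined $u_M(\cdot;\rho^*)$ is then not a network with parameter $\rho^*$ at all, and you would be minimizing over limits of the spaces $V_\rho$, which is a different theorem.

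Two further points weaken the repair.
\begin{itemize}
\item For $\sigma$ merely continuous, as the theorem assumes, $V^*$ can depend on the subsequence. For example, take $\sigma(t)=t\,(2+\sin\log\lvert t\rvert)$ with $\sigma(0)=0$; the direction of $\sigma(\rho z)/\rho$ oscillates as $\rho\to0$. So the convention needs something like analyticity to be well defined.
\item $G(0)$ is singular whenever $M\ge2$, so your ``nonsingular Gram matrix'' version still relies on the same convention at $\rho=0$.
\end{itemize}

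The honest fix is to change the hypotheses. Work on $[\rho_{min},\rho_{max}]$ with $\rho_{min}>0$, and assume $\operatorname{rank}\mathbf{A}(\rho)$ is constant there (e.g.\ full column rank). Then $\rho\mapsto\mathbf{A}(\rho)^{+}$ is continuous, your steps (i)--(ii) give continuity of $E$, and the extreme value theorem finishes. This is the paper's argument with its missing hypothesis made explicit, and it is consistent with the Remark excluding $\rho=0$. Note also that Algorithm~\ref{algo:scale} only minimizes over finitely many grid values of $\rho$, where existence of a minimizer is trivial.
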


\begin{proof}
    Let $\Phi$ be the set of basis functions $\sigma(\rho (\boldsymbol{W}\boldsymbol{x}+\boldsymbol{b}))$, and $S=[\boldsymbol{x}_1, \boldsymbol{x}_2, \cdots, \boldsymbol{x}_N]^T$ be the set of collocation points in the domain $\Omega$. According to Equation \eqref{eq:A_basis}, we substitute the set of collocation points $S$ into it to obtain 
    \begin{equation*}
        \mathbf{A}(\rho)\boldsymbol{w}(\rho)=\mathbf{F},
    \end{equation*}
    where $\mathbf{F} = [u(\boldsymbol{x}_1), u(\boldsymbol{x}_2), \cdots, u(\boldsymbol{x}_N)]^T$.

The continuity of the activation function $\sigma$ implies continuity of $\mathbf{A} (\rho)$ in $\rho$. Given $\mathbf{F}$ is likewise continuous in $\rho$, it follows that both $\boldsymbol{w}(\rho)$ and ultimately $u_M(\boldsymbol{x};\rho)$ are continuous in $\rho$. Let $e(\boldsymbol{x}; \rho)=u(\boldsymbol{x}) - u_M(\boldsymbol{x}; \rho)$, where $e(\boldsymbol{x}; \rho)$ is continuous in $\rho$. By the continuity of the norm,  $\lVert e(\boldsymbol{x}; \rho) \rVert$ is likewise continuous in $\rho$.  Consequently,  $\forall \rho_{max} > 0$, $\lVert e(\boldsymbol{x}; \rho) \rVert$ attains its maximum and minimum values on the closed interval $[0,\rho_{max}]$.
\end{proof}

\begin{remark}
In practice, the search for the optimal scaling factor typically excludes zero as an initial candidate, since empirical evidence indicates that $\rho =0 $ is rarely an optimal choice.
\end{remark}

Figure~\ref{fig:2D_scale} shows how the $L_{\infty}$ error evolves with the scaling factor across different basis function counts. The ELM with the $\tanh$ activation function is employed to solve the two-dimensional Equation~\eqref{eq:2D_function}. The scaling factor $\rho$ is varied within the range $(0,10]$ with a step size of $0.01$. As shown in the figure, increasing the number of basis functions enhances the representational capacity of the ELM, leading to a significant reduction in error. Furthermore, 
while the error fluctuates as the scaling factor $\rho$ changes, reveals the existence of an optimal scaling factor (denoted as $\rho_{opt}$) that minimizes the error.
This observation numerically validates Theorem~\ref{thm:opt_func_app}.

\chadded{
It should be noted that although Figure~\ref{fig:2D_scale} displays only the error curves of the ELM with $\tanh$ activation function under varying scaling factor $\rho$, a similar trend occurs in FENs. Moreover, FENs typically operate across a broader $\rho$ search range and achieve lower errors.
}

\begin{figure}[htbp]
    \begin{minipage}{0.9\linewidth}
        \centering
        \includegraphics[width=0.8\textwidth]{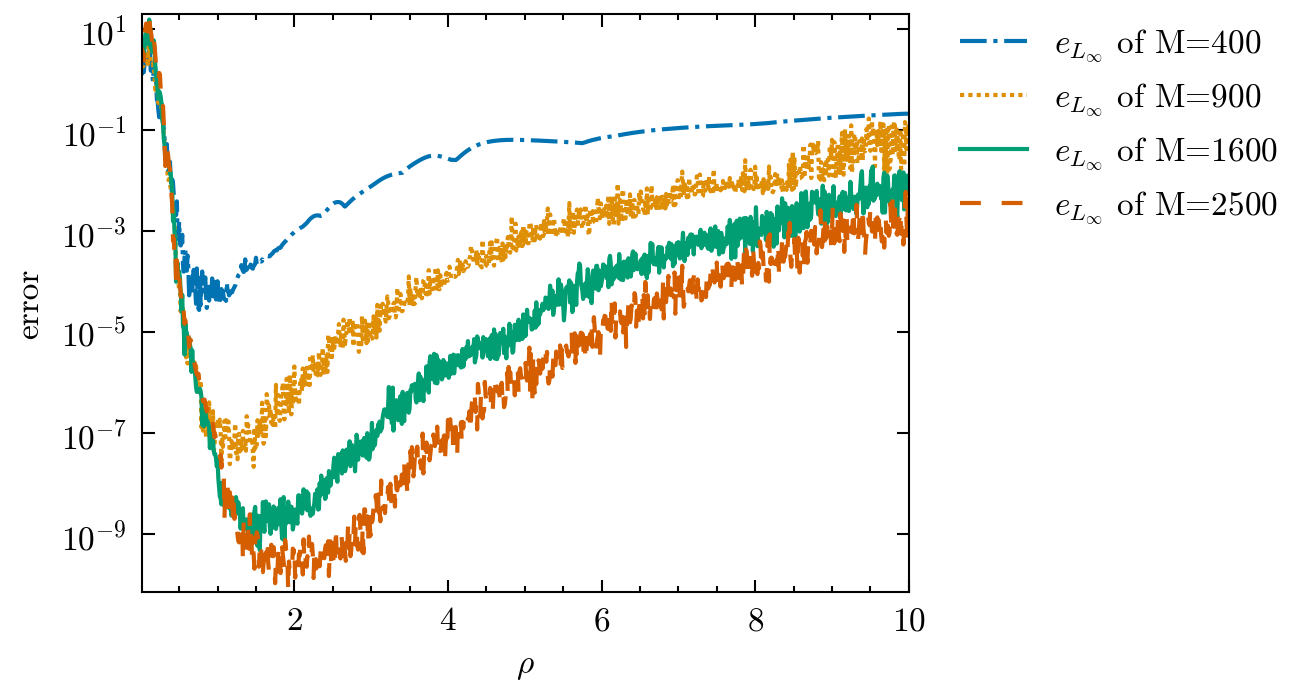}
    \end{minipage}
    \caption{Error variation with the scaling factor $\rho$ for different numbers of basis functions. 
    }
    \label{fig:2D_scale}
\end{figure}

\subsection{Solving linear PDEs}
Consider the following generic linear PDE
\begin{numcases}{}
    \mathcal{L}u= f,    \  \mbox{in} \enspace \Omega,      \label{eq:control_equation}    \\
    \mathcal{B}u = g,   \  \mbox{on} \enspace \partial \Omega,  \label{eq:boundary_condition}
 \end{numcases}
where $u$ is the scalar field function to be approximated, $\mathcal{L}$ is a linear PDE operator defined within the domain $\Omega$ and $\mathcal{B}$ is a linear boundary operator acting on the boundary $\partial \Omega$. The functions $f$ and $g$ represent the source term and the boundary condition, respectively. 

Let $\Phi$ denote the basis functions generated by the outputs of the FENs. Applying the operators $\mathcal{L}$ and $\mathcal{B}$ to these basis functions yields
\begin{numcases}{}
    \mathcal{L}\Phi = (\mathcal{L}\phi_1, \mathcal{L}\phi_2, \cdots, \mathcal{L}\phi_M),      \label{eq:L_ctrl_eq}    \\
    \mathcal{B}\Phi = (\mathcal{B}\phi_1, \mathcal{B}\phi_2, \cdots, \mathcal{B}\phi_M).      \label{eq:B_boun_con}
\end{numcases}
Assume that the collocation dataset $S$ consists of $N = N_r + N_b$ points, with $N_r$ points distributed within the domain $\Omega$ and $N_b$ points allocated on the boundary $\partial \Omega$. These points are partitioned into two subsets: $S_r$ (interior points) and $S_b$ (boundary points). Under this configuration, the matrix $\mathbf{A}$, which corresponds to the left-hand side of the PDE system, is constructed as follows
\begin{equation}
    \label{eq:linear_system_left}
    \begin{aligned}
    \mathbf{A} &= 
    \begin{bmatrix}
        \mathcal{L}\Phi(S_r) \\
        \mathcal{B}\Phi(S_b)
    \end{bmatrix} \\
    &= 
    \begin{bmatrix}
        \mathcal{L}\phi_1(\boldsymbol{x}_1) & \mathcal{L}\phi_2(\boldsymbol{x}_1) & \cdots & \mathcal{L}\phi_M(\boldsymbol{x}_1) \\
        \vdots & \vdots & \cdots  & \vdots \\
        \mathcal{L}\phi_1(\boldsymbol{x}_{N_r}) & \mathcal{L}\phi_2(\boldsymbol{x}_{N_r}) & \cdots & \mathcal{L}\phi_M(\boldsymbol{x}_{N_r}) \\
        \mathcal{B}\phi_1(\boldsymbol{x}_{N_r+1}) & \mathcal{B}\phi_2(\boldsymbol{x}_{N_r+1}) & \cdots & \mathcal{B}\phi_M(\boldsymbol{x}_{N_r+1}) \\
        \vdots & \vdots & \cdots  & \vdots \\
        \mathcal{B}\phi_1(\boldsymbol{x}_{N_r+N_b}) & \mathcal{B}\phi_2(\boldsymbol{x}_{N_r+N_b}) & \cdots & \mathcal{B}\phi_M(\boldsymbol{x}_{N_r+N_b}) \\
    \end{bmatrix}.
    \end{aligned}
\end{equation}

As described in the previous section, the coefficient vector $\boldsymbol{w}$ is obtained by solving the linear system $\mathbf{A} \boldsymbol{w} = \mathbf{F}$, where the right-hand side vector $\mathbf{F}$ is given by
\begin{equation}
    \label{eq:linear_system_right}
    \boldsymbol{F} = [ f(\boldsymbol{x}_1), \cdots, f(\boldsymbol{x}_{N_r}), g(\boldsymbol{x}_{N_r +1}), \cdots, g(\boldsymbol{x}_{N_r + N_b}) ]^T.
\end{equation}
Here, $f(\boldsymbol{x}_i)$ for $1 \le i \le N_r$ corresponds to the source term evaluated at the ‌interior collocation points, whereas $g(\boldsymbol{x}_i)$ for $N_r+1 \le i \le N_r+N_b$ represents the boundary conditions imposed at the ‌boundary collocation points. \chadded{It is worth noting that, for time-dependent problems, we adopt a unified treatment of both initial and boundary conditions by handling them as boundary constraints. This approach simplifies the overall formulation. Additionally, we do not explicitly assign different weights to the governing equations and the initial or boundary conditions; instead, all constraints are treated with equal importance, with an implicit weight of 1. The training and test sets are chosen to be identical, as our objective is to compute solution values on the given dataset. This setup aligns with the traditional goal in computational mathematics, which is to approximate the solution at prescribed or unknown locations.} 

\subsection{Approxmation theory of FENs}

Let us consider functions defined on $\mathbb{R}^d$ that admit the following Fourier integral representation
\begin{equation}
    \label{eq:Fourier_representation}
    u(\boldsymbol{x}) = \int_{\mathbb{R}^d} e^{\mathbb{i} \boldsymbol{W}_r\boldsymbol{x}} \tilde{F}(\d{\boldsymbol{W}_r}),
\end{equation}
where $\tilde{F}(\d{\boldsymbol{W}_r})=e^{i\theta(\boldsymbol{W}_r)}F(\d{\boldsymbol{W}_r})$ is a unique complex-valued measure (referred to as the Fourier distribution), with $F(\d{\boldsymbol{W}_r})$ representing the magnitude distribution, $\theta(\boldsymbol{W}_r)$ denoting the phase, and $\boldsymbol{W}_r \in \mathbb{R}^{1 \times d}$.
For each $C>0$, we define
\begin{equation}
    \label{eq:F_C_Omega}
    \mathbb{F}_{C,\Omega}= \left\{ u:\Omega \to \mathbb{R} \big| C_{u, \Omega} = \int_{\mathbb{R}^d} F(\d{\boldsymbol{W}_r}) \le C \right\}.
\end{equation}


\begin{theorem}
    \label{thm:cos}
    \textbf{(Approximation error of FEN with a $\cos$ activation \cite{zhumekenov2021approximation})} Let $u_M = \sum_{i=1}^M w_i \cos(\boldsymbol{W}_i \boldsymbol{x} + b_i)$ for $\boldsymbol{W}_i \in \mathbb{R}^{1 \times d}$, $b_i \in \mathbb{R}$, $\lvert w_i \rvert \le \frac{C}{M}$, and $\boldsymbol{x} \in \mathbb{R}^d$. For each function $u \in \mathbb{F}_{C, \Omega}$ and any probability measure $\mu$, there exists $u_M$ ($M \ge 1$), such that     
    \begin{equation}
        \int_{\Omega} \lvert u(\boldsymbol{x}) - u_M(\boldsymbol{x}) \rvert ^2 \mu(\d \boldsymbol{x}) \le \frac{C^2}{M}. \nonumber
    \end{equation}
\end{theorem}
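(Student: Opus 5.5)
This is a Barron/Maurey-type sampling argument: write $u$ as an average of cosine ridge functions, then show that an average of $M$ randomly drawn terms approximates it in $L^2(\mu)$ with variance of order $C^2/M$.

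\emph{Step 1: integral representation.} Since $u$ is real-valued, take real parts in \eqref{eq:Fourier_representation}:
\begin{equation*}
u(\boldsymbol{x}) = \int_{\mathbb{R}^d} \cos\bigl(\boldsymbol{W}_r\boldsymbol{x} + \theta(\boldsymbol{W}_r)\bigr)\, F(\d\boldsymbol{W}_r).
\end{equation*}
Let $C_u = C_{u,\Omega} = \int F(\d\boldsymbol{W}_r) \le C$. If $C_u = 0$, then $u \equiv 0$ and we take $u_M = 0$. Otherwise, define the probability measure $\Lambda = F/C_u$. Then
\begin{equation*}
u(\boldsymbol{x}) = \mathbb{E}_{\boldsymbol{W}\sim\Lambda}\bigl[g(\boldsymbol{x};\boldsymbol{W})\bigr], \qquad g(\boldsymbol{x};\boldsymbol{W}) = C_u\cos\bigl(\boldsymbol{W}\boldsymbol{x}+\theta(\boldsymbol{W})\bigr).
\end{equation*}
This exhibits $u$ as an expectation of functions $g$ with $|g| \le C_u \le C$ pointwise.

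\emph{Step 2: random sampling.} Draw $\boldsymbol{W}_1,\dots,\boldsymbol{W}_M$ i.i.d.\ from $\Lambda$. Set $b_i = \theta(\boldsymbol{W}_i)$ and $w_i = C_u/M$, so that $|w_i| \le C/M$ as required. Define
\begin{equation*}
u_M = \frac{1}{M}\sum_{i=1}^M g(\cdot;\boldsymbol{W}_i).
\end{equation*}
By Fubini (the integrand is bounded and $\mu$ is a probability measure) and independence, the cross terms vanish because each summand has mean $u(\boldsymbol{x})$. Hence
\begin{equation*}
\mathbb{E}\int_\Omega |u-u_M|^2\,\mu(\d\boldsymbol{x}) = \frac{1}{M}\int_\Omega \mathrm{Var}_{\Lambda}\bigl(g(\boldsymbol{x};\boldsymbol{W})\bigr)\,\mu(\d\boldsymbol{x}) \le \frac{1}{M}\int_\Omega \mathbb{E}\,g^2\,\mu(\d\boldsymbol{x}) \le \frac{C_u^2}{M} \le \frac{C^2}{M}.
\end{equation*}

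\emph{Step 3: existence.} A nonnegative random variable cannot exceed its expectation almost surely, so some realization $(\boldsymbol{W}_i)_{i=1}^M$ satisfies the bound. That realization gives the required $u_M$.

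\emph{Main obstacle.} The delicate point is measurability and the use of Fubini. The phase $\theta$ must be a measurable function; it is a Radon–Nikodym density of $\tilde F$ with respect to $|\tilde F| = F$, which gives measurability. The representation must hold $\mu$-a.e.\ on $\Omega$, and the integrand $(\boldsymbol{x},\boldsymbol{W}) \mapsto g$ must be jointly measurable so that the expectation and the $\mu$-integral can be exchanged. Both are standard given that $F$ is a finite measure, and $g$ is bounded, so no integrability issues arise.
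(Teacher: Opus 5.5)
Your proof is correct. It is also essentially the argument behind this result: the paper gives no proof of its own but imports the theorem from \cite{zhumekenov2021approximation}, which, following Barron, writes $u(\boldsymbol{x})=\int\cos(\boldsymbol{W}_r\boldsymbol{x}+\theta(\boldsymbol{W}_r))\,F(\d\boldsymbol{W}_r)$ to place $u$ in the closed convex hull of $\{\gamma\cos(\boldsymbol{W}\boldsymbol{x}+b): |\gamma|\le C\}$, then applies the Maurey--Jones--Barron lemma to obtain $u_M=\frac{1}{M}\sum_i\gamma_i\cos(\boldsymbol{W}_i\boldsymbol{x}+b_i)$ with $|w_i|\le C/M$. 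Your i.i.d.\ sampling from $F/C_u$ is the probabilistic proof of that lemma applied directly to the continuous mixture, which lets you skip the separate closure-of-the-convex-hull step.
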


\begin{theorem}
    \label{thm:sin}
    \textbf{(Approximation error of FEN with a $\sin$ activation)}
    Let $u_M = \sum_{i=1}^M w_i \sin(\boldsymbol{W}_i \boldsymbol{x} + b_i)$ for $\boldsymbol{W}_i \in \mathbb{R}^{1 \times d}$, $b_i \in \mathbb{R}$, $\lvert w_i \rvert \le \frac{C}{M}$ and $\boldsymbol{x} \in \mathbb{R}^d$. For each function $u \in \mathbb{F}_{C, \Omega}$ and any probability measure $\mu$, there exists $u_M$ ($M \ge 1$), such that  
    \begin{equation}
        \int_{\Omega} \lvert u(\boldsymbol{x}) - u_M(\boldsymbol{x}) \rvert ^2 \mu(\d \boldsymbol{x}) \le \frac{C^2}{M}. \nonumber
    \end{equation}
\end{theorem}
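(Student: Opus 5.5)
The plan is to reduce Theorem~\ref{thm:sin} to Theorem~\ref{thm:cos} through the phase-shift identity $\sin(t) = \cos\!\left(t - \tfrac{\pi}{2}\right)$. Fix $u \in \mathbb{F}_{C,\Omega}$ and a probability measure $\mu$. Theorem~\ref{thm:cos} supplies weights $\boldsymbol{W}_i \in \mathbb{R}^{1\times d}$, biases $b_i \in \mathbb{R}$ and coefficients $|w_i| \le C/M$ such that
\begin{equation*}
    v_M(\boldsymbol{x}) = \sum_{i=1}^M w_i \cos(\boldsymbol{W}_i\boldsymbol{x} + b_i)
\end{equation*}
satisfies $\int_\Omega |u - v_M|^2 \,\mu(\d\boldsymbol{x}) \le C^2/M$.

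Next define the shifted biases $\tilde b_i = b_i + \tfrac{\pi}{2} \in \mathbb{R}$ and keep the same weights and coefficients. Set
\begin{equation*}
    u_M(\boldsymbol{x}) = \sum_{i=1}^M w_i \sin(\boldsymbol{W}_i \boldsymbol{x} + \tilde b_i).
\end{equation*}
Since $\sin(t + \tfrac{\pi}{2}) = \cos t$, this gives $u_M \equiv v_M$ pointwise on $\mathbb{R}^d$, and hence on $\Omega$. The constraints required in Theorem~\ref{thm:sin} still hold. The $\boldsymbol{W}_i$ are unchanged, $\tilde b_i$ is real, and $|w_i| \le C/M$ carries over directly. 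The error integral is therefore identical, and $\int_\Omega |u-u_M|^2\,\mu(\d\boldsymbol{x}) \le C^2/M$.

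Nothing in this argument is difficult, because the sine class with free real biases equals the cosine class. The only point to check is that Theorem~\ref{thm:cos} places no restriction on the biases, such as $b_i \in [0,2\pi)$ or a bound like $|b_i|\le B$, that the shift could violate. If such a restriction existed, I would reduce the shifted bias modulo $2\pi$, which leaves the function unchanged.

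For a self-contained alternative, I would repeat Barron's Maurey-type argument from \cite{zhumekenov2021approximation}. Take the real part of \eqref{eq:Fourier_representation}, normalised at the origin if needed, to write
\begin{equation*}
    u(\boldsymbol{x}) = \int \cos(\boldsymbol{W}_r\boldsymbol{x} + \theta(\boldsymbol{W}_r))\, F(\d\boldsymbol{W}_r) = C_{u,\Omega}\, \mathbb{E}_{\boldsymbol{W}_r \sim F/C_{u,\Omega}}\!\left[\sin\!\left(\boldsymbol{W}_r\boldsymbol{x} + \theta(\boldsymbol{W}_r) + \tfrac{\pi}{2}\right)\right].
\end{equation*}
Then draw $M$ i.i.d.\ samples and bound the variance of the empirical mean by $C^2/M$. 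It follows that some realisation, with coefficients $C_{u,\Omega}/M \le C/M$, achieves the bound.
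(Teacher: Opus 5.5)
Your reduction is correct and matches the paper's argument: the paper only remarks that Theorem~\ref{thm:sin} follows straightforwardly from Theorem~\ref{thm:cos}. Shifting each bias by $\pi/2$ via $\sin(t+\tfrac{\pi}{2})=\cos t$, with the same weights and coefficients, is exactly that one-line argument, and it keeps $|w_i|\le C/M$ because Theorem~\ref{thm:cos} places no constraint on the biases.
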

According to Theorem  \ref{thm:cos}, the proof of Theorem \ref{thm:sin} is straightforward. 
\begin{theorem}
    \label{thm:cos_sin}
    \textbf{(Approxmation error of FEN with $\cos$ and $\sin$ activations)} Let 
    \begin{equation}
        \label{eq:u_M_cos_sin}
        u_M = \sum_{i=1}^p w_{i}^{(1)} \cos(\boldsymbol{W}_i^{(1)} \boldsymbol{x} + b_i^{(1)}) + \sum_{i=1}^p w_{i}^{(2)} \sin(\boldsymbol{W}_i^{(2)} \boldsymbol{x} + b_i^{(2)}) \nonumber
    \end{equation}
    for $\boldsymbol{W}_i^{(1)} \in \mathbb{R}^{1 \times d}$, $\boldsymbol{W}_i^{(2)} \in \mathbb{R}^{1 \times d}$, $b_i^{(1)} \in \mathbb{R}$, $b_i^{(2)} \in \mathbb{R}$, $\lvert w_i^{(1)} \rvert \le \frac{C}{p}$, $\lvert w_i^{(2)} \rvert  \le \frac{C}{p}$, and $\boldsymbol{x} \in \mathbb{R}^d$. For each function $u \in \mathbb{F}_{C, \Omega}$ and any probability measure $\mu$, there exists $u_M$ ($M =2p \ge 2$), such that  
    \begin{equation}
        \int_{\Omega} \lvert u(\boldsymbol{x}) - u_M(\boldsymbol{x}) \rvert ^2 \mu(\d \boldsymbol{x}) \le \frac{4C^2}{p}. \nonumber
    \end{equation}
\end{theorem}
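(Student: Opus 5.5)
The plan is to reduce the mixed network to the two single-activation results, Theorem~\ref{thm:cos} and Theorem~\ref{thm:sin}, each applied with $p$ neurons, and to combine the two approximations with an elementary inequality.

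\emph{Splitting the target.} Given $u\in\mathbb{F}_{C,\Omega}$, write $u=\tfrac12 u+\tfrac12 u$. Since $\tfrac12 u$ has Fourier distribution $\tfrac12\tilde F$, it lies in $\mathbb{F}_{C/2,\Omega}\subset\mathbb{F}_{C,\Omega}$.

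\emph{Invoking the earlier theorems.} Apply Theorem~\ref{thm:cos} with $M=p$ to $\tfrac12 u$. This gives
\[
u^{(1)}_p=\sum_{i=1}^p w_i^{(1)}\cos(\boldsymbol{W}_i^{(1)}\boldsymbol{x}+b_i^{(1)}),\qquad
\int_\Omega\bigl|\tfrac12 u-u^{(1)}_p\bigr|^2\mu(\d\boldsymbol{x})\le \frac{C^2}{p},
\]
with $|w_i^{(1)}|\le C/p$. Here we use the weaker constant $C$ rather than $C/2$, which is permitted because $\tfrac12 u\in\mathbb{F}_{C,\Omega}$. In the same way, Theorem~\ref{thm:sin} applied to $\tfrac12 u$ with $M=p$ gives a sine network $u^{(2)}_p$ with $p$ neurons, the same weight bound, and the same error bound.

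\emph{Combining.} Set $u_M=u^{(1)}_p+u^{(2)}_p$. This has exactly the form in the statement with $M=2p$, and the coefficient constraints $|w_i^{(1)}|,|w_i^{(2)}|\le C/p$ hold. By $|a+b|^2\le 2|a|^2+2|b|^2$,
\[
\int_\Omega|u-u_M|^2\,\d\mu\le 2\int_\Omega\bigl|\tfrac12 u-u^{(1)}_p\bigr|^2\d\mu+2\int_\Omega\bigl|\tfrac12 u-u^{(2)}_p\bigr|^2\d\mu\le \frac{4C^2}{p}.
\]

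There is no real obstacle. The only point to check is that the bound $|w_i|\le C/p$ is exactly what Theorems~\ref{thm:cos} and \ref{thm:sin} deliver when $M=p$.

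A sharper constant is available by using $\tfrac12 u\in\mathbb{F}_{C/2,\Omega}$. This gives per-piece error $C^2/(4p)$ and total error at most $C^2/p$, which is stronger than claimed. Alternatively, one can apply Theorem~\ref{thm:cos} to all of $u$ with $p$ neurons and simply take $w^{(2)}=0$, which gives $C^2/p$ directly.
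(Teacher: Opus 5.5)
Your proposal is correct and follows the paper's proof essentially verbatim: both split $u=\tfrac12 u+\tfrac12 u$, approximate each half by a $p$-term cosine (resp.\ sine) network via Theorems~\ref{thm:cos} and \ref{thm:sin} with error at most $C^2/p$, and combine the two using $|a+b|^2\le 2|a|^2+2|b|^2$ to obtain $4C^2/p$. You also make explicit the step the paper leaves implicit, namely $\tfrac12 u\in\mathbb{F}_{C/2,\Omega}\subset\mathbb{F}_{C,\Omega}$, and your side remark that the sharper bound $C^2/p$ follows from the same argument is correct.
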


\begin{proof}
    From Theorem \ref{thm:cos}, for any given function $u \in \mathbb{F}_{C, \Omega}$, there is a sum $u_p^{(1)} = \sum_{i=1}^p w_i^{(1)} \cos(\boldsymbol{W}_i^{(1)} \boldsymbol{x} + b_i^{(1)})$, such that
    \begin{equation}
        \int_{\Omega} \lvert \frac{1}{2}u(\boldsymbol{x}) - u_p^{(1)}(\boldsymbol{x}) \rvert ^2 \mu(\d \boldsymbol{x}) \le \frac{C^2}{p}. \nonumber
    \end{equation}
 From Theorem \ref{thm:sin}, for any given function $u \in \mathbb{F}_{C, \Omega}$, there is a sum $u_p^{(2)} = \sum_{i=1}^p w_i^{(2)} \cos(\boldsymbol{W}_i^{(2)} \boldsymbol{x} + b_i^{(2)})$, such that
    \begin{equation}
        \int_{\Omega} \lvert \frac{1}{2}u(\boldsymbol{x}) - u_p^{(2)}(\boldsymbol{x}) \rvert ^2 \mu(\d \boldsymbol{x}) \le \frac{C^2}{p}. \nonumber
    \end{equation}

    Since $u_M = u_p^{(1)} + u_p^{(2)}$, then
    \begin{equation}
        \begin{aligned}
        \int_{\Omega} \lvert u(\boldsymbol{x}) - u_M(\boldsymbol{x}) \rvert ^2 \mu(\d \boldsymbol{x}) &= \int_{\Omega} \lvert \frac{1}{2}u(\boldsymbol{x}) - u_p^{(1)}(\boldsymbol{x}) + \frac{1}{2}u(\boldsymbol{x}) - u_p^{(2)}(\boldsymbol{x}) \rvert ^2 \mu(\d \boldsymbol{x}) \\
        & \le \int_{\Omega} 2\lvert \frac{1}{2}u(\boldsymbol{x}) - u_p^{(1)}(\boldsymbol{x}) \rvert ^2 \mu(\d \boldsymbol{x}) + \int_{\Omega} 2\lvert \frac{1}{2}u(\boldsymbol{x}) - u_p^{(2)}(\boldsymbol{x}) \rvert ^2 \mu(\d \boldsymbol{x}) \\
        & \le \frac{2C^2}{p} + \frac{2C^2}{p} = \frac{4C^2}{p}.\nonumber
        \end{aligned}
    \end{equation}
\end{proof}

\section{Numerical Experiments}
\label{sec:experiments}
In this section, we present numerical experiments to demonstrate the applicability and accuracy of the proposed FENs. The experiments include two key tasks: function approximation and solving PDEs. All experiments are conducted on a high-performance computing server running Debian 12. The server is equipped with an Intel Xeon Platinum 8358 CPU operating at 2.60 GHz, and an NVIDIA A100 GPU with 80 GB of memory. These robust hardware specifications provide the computational capacity required for the intensive operations involved in training and evaluating FENs.

Moreover, to quantitatively evaluate the approximation capabilities of neural networks in the numerical experiments, the maximum absolute error ($L_{\infty}$ error) and the relative $L_2$ error are defined as follows:
\begin{eqnarray}\label{eq:definitionerror}
    e_{L_{\infty}}  = \max_{1\leq i \leq N} \vert u_M(\boldsymbol{x}_i)-u_{exact}(\boldsymbol{x}_i) \vert,  \\
    e_{L_2} = \sqrt{\frac{\sum_{i=1}^{N}(u_M(\boldsymbol{x}_i)-u_{exact}(\boldsymbol{x}_i))^2}{\sum_{i=1}^{N}(u_{exact}(\boldsymbol{x}_i))^2}},
\end{eqnarray}
where $u_M$ and $u_{\text{exact}}$ represent the approximate and exact solutions, respectively, and $\boldsymbol{x}_i$ ($1 \le i \le N$) are the collocation points for error evaluation.

\subsection{Two-dimensional function}
To evaluate the capability of the FENs in function approximation, we implement them on a two-dimensional function defined in Equation \eqref{eq:2D_function}
\begin{equation}
    \label{eq:2D_function}
    u(x, y) = \sin(\pi x) \sin(4\pi y).
\end{equation}

In this experiment, we utilize a uniformly distributed training grid with dimensions $N_x \times N_y = 101 \times 101$, 
where $N_x$ and $N_y$ denote the number of points along the $x$-axis and $y$-axis, respectively. The number of basis functions $M$ is chosen to be $400$, $900$, $1600$, and $2500$, respectively. 
For comparative analysis, we implement ELMs with three distinct activation functions $\text{sigmoid}$, $\tanh$, and $\text{swish}$ to address the same problem.

Table \ref{tab:params_2d} summarizes the parameters used for the approximation. The optimal scaling factor for each activation function is determined by searching within a specified range with a small step size. In Table \ref{tab:error_2d}, we present the approximation errors for FENs and ELMs (using $\text{sigmoid}$, $\tanh$, and $\text{swish}$ activations) 
across different basis function configurations ($M = 400, 900, 1600, 2500$). 
As illustrated in Figure \ref{fig:error_2d}, the $L_{\infty}$ error analysis reveals that FEN solutions exhibit significantly superior accuracy compared to ELM implementations. 
These results demonstrate that the Fourier feature incorporation in FENs provides enhanced approximation capability for the target two-dimensional function, as evidenced by the systematically lower $L_{\infty}$ error metrics.

Quantitatively, when the number of basis functions is sufficient, the smallest $L_{\infty}$ and $L_2$ errors achieved by FENs are $6.4756 \times 10^{-15}$ and $1.4677 \times 10^{-15}$, respectively. In contrast, the smallest $L_{\infty}$ and $L_2$ errors for ELMs are $1.7599 \times 10^{-10}$ and $1.8013 \times 10^{-11}$, respectively. These results highlight the significantly higher representational power of FENs compared to ELMs in this two-dimensional approximation task.

\begin{table}[htp]
    \begin{center}
        \caption{Function approximation: Parameters used in approximating the two-dimensional function defined by Equation \eqref{eq:2D_function}.}
        \setlength\tabcolsep{2pt}
        \small{
        \begin{tabular}{ccccccc}
            \hline\noalign{\smallskip}
            \multirow{2}{*}{Activations} & \multirow{2}{*}{$(\rho_{min}, \rho_{max}]$} & \multirow{2}{*}{$\rho_s$} & \multicolumn{4}{c}{$\rho_{opt}$}  \\
            & &          & $M=400$ & $M=900$ & $M=1600$ & $M=2500$ \\
            \hline
            $\text{sigmoid}$      & $(0, 10]$ & 0.01 & 1.52 & 2.19 & 2.94 & 3.81  \\
            $\tanh$         & $(0, 10]$ & 0.01 & 0.76 & 1.47 & 1.55 & 1.92  \\
            $\text{swish}$        & $(0, 10]$ & 0.01 & 1.49 & 2.61 & 3.06 & 3.76  \\
            $\cos$          & $(0, 50]$ & 0.1  & 7.4  & 9.5  & 9.7  & 22.8  \\
            $\sin$          & $(0, 50]$ & 0.1  & 7.7  & 9.1  & 10.3 & 22.7  \\
            $\cos$ $\&$ $\sin$ & $(0, 50]$ & 0.1  & 9.0  & 9.6  & 9.6  & 12.6  \\
            \hline
        \end{tabular}
        }
        \label{tab:params_2d}
    \end{center}
\end{table}

\begin{figure}[htbp]
    \begin{minipage}{0.9\linewidth}
        \centering
        \includegraphics[width=0.8\textwidth]{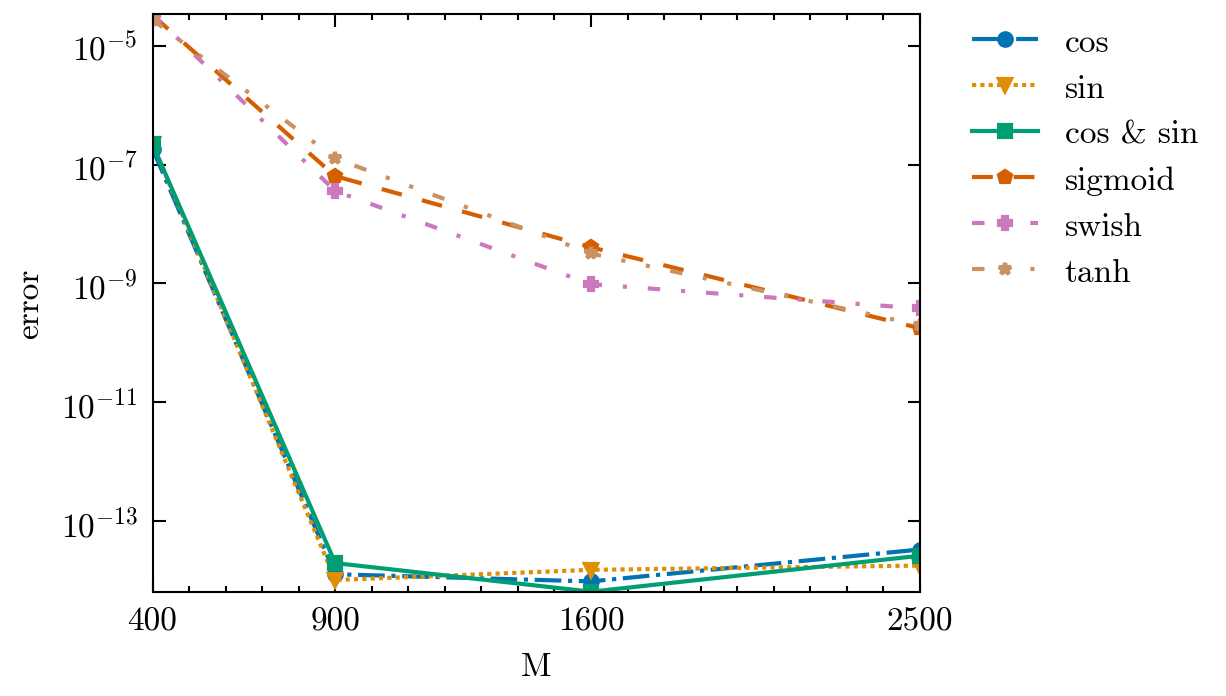}
    \end{minipage}
    \caption{Function approximation: $L_{\infty}$ errors of neural networks in approximating the two-dimensional function defined by Equation \eqref{eq:2D_function}.}
    \label{fig:error_2d}
\end{figure}

\begin{table}[htp]
    \begin{center}
        \caption{Function approximation: Performance comparison of FENs and ELMs with $\text{sigmoid}$, $\tanh$, and $\text{swish}$ activations in approximating the two-dimensional function defined by Equation \eqref{eq:2D_function}. The $L_{\infty}$ and $L_2$ errors for each model configuration are presented.}
        \setlength\tabcolsep{2pt}
        \small{
        \begin{tabular}{ccccccccc}
            \hline\noalign{\smallskip}
            \multirow{2}{*}{Activations} & \multicolumn{2}{c}{M=400} & \multicolumn{2}{c}{M=900} & \multicolumn{2}{c}{M=1600} & \multicolumn{2}{c}{M=2500}  \\
            & $e_{L_{\infty}}$ & $e_{L_{2}}$ & $e_{L_{\infty}}$ & $e_{L_{2}}$ & $e_{L_{\infty}}$ & $e_{L_{2}}$ & $e_{L_{\infty}}$ & $e_{L_{2}}$     \\
            \hline
            \text{sigmoid}      & 3.2425E-05 & 9.2209E-06 &6.3796E-08 & 6.2611E-09 &4.0454E-09 & 2.7714E-10 &1.7599E-10 & 1.8013E-11  \\
            $\tanh$         & 2.7418E-05 & 6.1370E-06 &1.2666E-07 & 1.0780E-08 &3.1869E-09 & 2.4560E-10 &1.9782E-10 & 1.8775E-11  \\
            $\text{swish}$        & 3.3855E-05 & 1.0940E-05 &3.5856E-08 & 4.5318E-09 &9.6770E-10 & 1.6892E-10 &3.7835E-10 & 2.0718E-11  \\
            $\cos$          & 1.8362E-07 & 7.4465E-08 &1.2727E-14 & 2.3325E-15 &9.6481E-15 & 1.6508E-15 &3.3154E-14 & 3.5549E-15  \\
            $\sin$          & 2.2316E-07 & 9.5567E-08 &1.0184E-14 & 3.9475E-15 &1.5099E-14 & 2.0114E-15 &1.7819E-14 & 3.4593E-15  \\
            $\cos$ $\&$ $\sin$ & 2.1903E-07 & 7.8576E-08 &1.9601E-14 & 4.6683E-15 &6.4756E-15 & 1.4677E-15 &2.6069E-14 & 2.3770E-15  \\
            \hline
        \end{tabular}
        }
        \label{tab:error_2d}
    \end{center}
\end{table}

\subsection{Helmholtz equation}
 The Helmholtz equation in two dimensions is given by Equation~\eqref{eq:Helmholtz_equation} 
\begin{equation}
    \label{eq:Helmholtz_equation}
    \begin{array}{r@{}l}
        \left\{
        \begin{aligned}
            \Delta u + k^2 u & = q, &  & \mbox{in} \enspace \Omega,          \\
            u             & = h,        &  & \mbox{on} \enspace \partial \Omega,
        \end{aligned}
        \right.
    \end{array}
\end{equation}
where $\Omega = (0, 1)^2$. 
The exact solution is defined as 
\begin{equation}
    \label{eq:Helmholtz_solution}
    u(x, y) = \sin(a_1 \pi x) \sin(a_2 \pi y),
\end{equation}
with the corresponding source term given by 
\begin{equation}
    \label{eq:Helmholtz_source_term}
    q(x, y) = \left(k^2-(a_1 \pi)^2 - (a_2 \pi)^2\right) \sin(a_1 \pi x) \sin(a_2 \pi y),
\end{equation}
where the parameters are set as $a_1 = 1$, $a_2 = 4$, and $k = 1$.

For training both FENs and ELMs, we employ a uniform grid of $N_x \times N_y = 101 \times 101$ collocation points. This training dataset comprises both interior points within $\Omega$ and boundary points on $\partial \Omega$. Consequently, the dataset $S$ is divided into two distinct subsets: $S_r$ (containing interior points), and $S_b$ (containing boundary points).

Table~\ref{tab:params_Helmholtz2D} summarizes the optimal scaling factors and search parameters used in solving the Helmholtz equation. Table~\ref{tab:error_Helmholtz2D} presents the approximation errors of FENs and ELMs with varying numbers of basis functions, all utilizing the optimized scaling configuration. Figure~\ref{fig:error_Helmholtz2D} illustrates the $L_{\infty}$ error trajectories for all models, demonstrating conclusively that FENs achieve superior approximation accuracy compared to ELMs across all parametric configurations.

The data presented in Table~\ref{tab:error_Helmholtz2D} reveal that with a sufficient number of basis functions, FENs attain minimum 
$L_{\infty}$ and $L_2$ errors of $5.3300 \times 10^{-14}$ and $2.7200 \times 10^{-14}$, respectively. In contrast, the best performance from ELMs yields $L_{\infty}$ and $L_2$ errors of $4.9477 \times 10^{-10}$ and $2.7595 \times 10^{-10}$, respectively. This result clearly demonstrates that FENs possess superior representational capacity and can achieve significantly higher accuracy than ELMs for the Helmholtz boundary value problem.

\begin{table}[htp]
    \begin{center}
        \caption{Helmholtz equation: Parameters when solving the Helmholtz equation \eqref{eq:Helmholtz_equation} with solution in Equation \eqref{eq:Helmholtz_solution}.}
        \setlength\tabcolsep{2pt}
        \small{
        \begin{tabular}{ccccccc}
            \hline\noalign{\smallskip}
            \multirow{2}{*}{Activations} & \multirow{2}{*}{$(\rho_{min}, \rho_{max}]$} & \multirow{2}{*}{$\rho_s$} & \multicolumn{4}{c}{$\rho_{opt}$}  \\
            & &          & $M=400$ & $M=900$ & $M=1600$ & $M=2500$ \\
            \hline
            $\text{sigmoid}$      & $(0, 10]$  & 0.01 & 1.38 & 1.66 & 2.48 & 2.98  \\
            $\tanh$         & $(0, 10]$  & 0.01 & 0.6  & 0.94 & 1.1  & 1.5  \\
            $\text{swish}$        & $(0, 50]$  & 0.01 & 1.4  & 2.0  & 2.2  & 2.4  \\
            $\cos$          & $(0, 100]$ & 0.1  & 5.6  & 9.3  & 9.0  & 13.1  \\
            $\sin$          & $(0, 100]$ & 0.1  & 5.8  & 9.9  & 9.9  & 12.0  \\
            $\cos$ $\&$ $\sin$ & $(0, 100]$ & 0.1  & 5.5  & 10.5 & 10.1 & 8.8  \\
            \hline
        \end{tabular}
        }
        \label{tab:params_Helmholtz2D}
    \end{center}
\end{table}

\begin{figure}[htbp]
    \begin{minipage}{0.9\linewidth}
        \centering
        \includegraphics[width=0.8\textwidth]{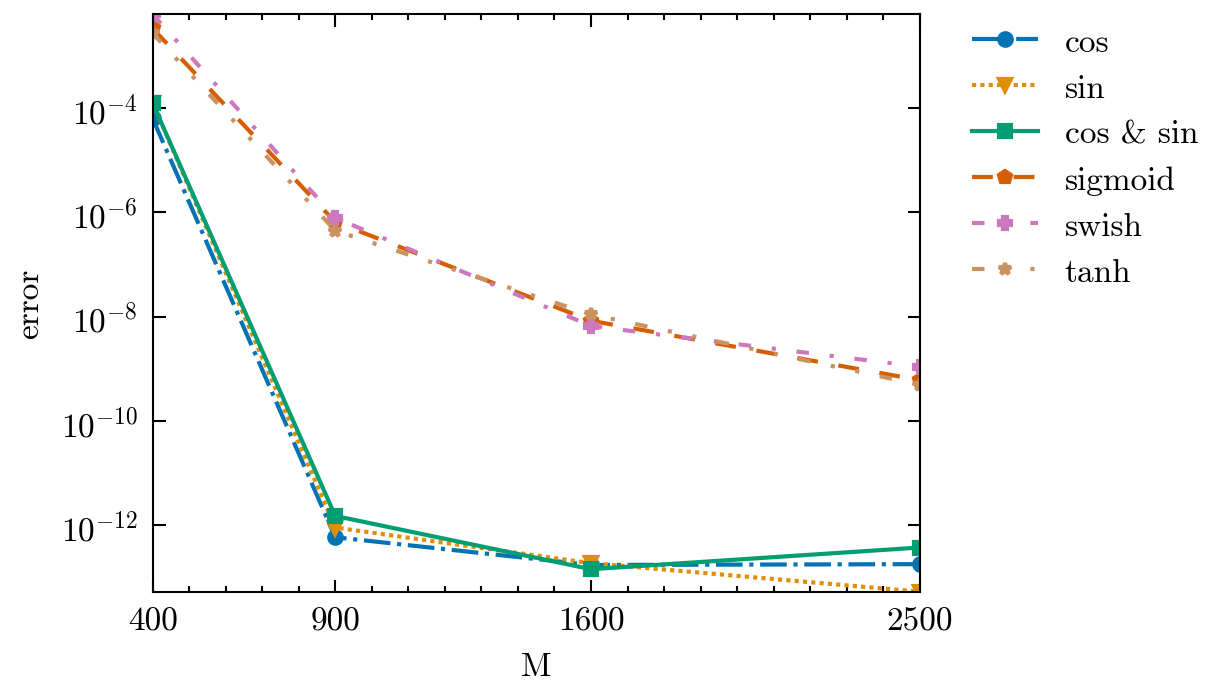}
    \end{minipage}    
    \caption{Helmholtz equation: $L_{\infty}$ errors of neural networks when solving Helmholtz equation \eqref{eq:Helmholtz_equation} with solution in Equation \eqref{eq:Helmholtz_solution}.}
    \label{fig:error_Helmholtz2D}
\end{figure}

\begin{table}[htp]
    \begin{center}
        \caption{Helmholtz equation: Performance comparison of FENs and ELMs activated by $\text{sigmoid}$, $\tanh$ and $\text{swish}$ when solving the Helmholtz equation \eqref{eq:Helmholtz_equation} with solution in Equation \eqref{eq:Helmholtz_solution}. The $L_{\infty}$ errors and $L_{2}$ errors for each model configuration are presented.}
        \setlength\tabcolsep{2pt}
        \small{
        \begin{tabular}{ccccccccc}
            \hline\noalign{\smallskip}
            \multirow{2}{*}{Activations} & \multicolumn{2}{c}{M=400} & \multicolumn{2}{c}{M=900} & \multicolumn{2}{c}{M=1600} & \multicolumn{2}{c}{M=2500}  \\
            & $e_{L_{\infty}}$ & $e_{L_{2}}$ & $e_{L_{\infty}}$ & $e_{L_{2}}$ & $e_{L_{\infty}}$ & $e_{L_{2}}$ & $e_{L_{\infty}}$ & $e_{L_{2}}$     \\
            \hline
            $\text{sigmoid}$      & 3.3875E-03 & 8.3152E-04 &6.5565E-07 & 5.7295E-07 &8.3819E-09 & 2.8206E-09 &6.0754E-10 & 3.9546E-10  \\
            $\tanh$         & 2.7466E-03 & 6.7760E-04 &4.4703E-07 & 2.0869E-07 &1.1059E-08 & 6.0855E-09 &4.9477E-10 & 2.7595E-10  \\
            $\text{swish}$        & 6.3477E-03 & 1.5769E-03 &7.8976E-07 & 2.2362E-07 &6.6357E-09 & 3.0735E-09 &1.0987E-09 & 3.9585E-10  \\
            $\cos$          & 6.3797E-05 & 1.7419E-05 &5.8978E-13 & 1.5736E-13 &1.7333E-13 & 7.0401E-14 &1.7977E-13 & 7.1273E-14  \\
            $\sin$          & 1.2699E-04 & 1.8981E-05 &9.0587E-13 & 2.7940E-13 &1.8736E-13 & 7.4318E-14 &5.3300E-14 & 2.7200E-14  \\
            $\cos$ $\&$ $\sin$ & 1.2552E-04 & 2.2074E-05 &1.5099E-12 & 2.4098E-13 &1.4385E-13 & 6.6964E-14 &3.7487E-13 & 1.3203E-13  \\
            \hline
        \end{tabular}
        }
        \label{tab:error_Helmholtz2D}
    \end{center}
\end{table}

\chadded{
For comparison, we also employ the finite element method (FEM) to solve the Helmholtz equation using a triangulation‌ with different mesh sizes $h$. 
The corresponding numerical results are presented in Table \ref{tab:error_Helmholtz2D_FEM}. It can be observed that even with a large number of degree of freedom (DoF), the accuracy achieved by FEM is still much lower than that of FENs, which is shown in Table \ref{tab:error_Helmholtz2D}.
}

\chadded{
\begin{remark}
    In this work, FENs employ a single-hidden-layer neural network with trigonometric activation functions as global basis functions, and solve PDEs using the collocation method. In contrast, the FEM uses local basis functions to approximate the solution piecewise, and is based on the variational principle to obtain a weak-form solution of the PDEs, whereas FENs aim to solve the strong form solution. While spectral methods also use global basis functions to approximate the strong form solution, they typically require the basis functions to be orthogonal. A key distinction between spectral methods and FENs lies in how they construct basis functions for multi-dimensional problems. For 2D or 3D PDEs, FENs can generate basis functions simply by randomly sampling the hidden-layer weights as in the 1D case, whereas spectral methods may generate basis functions using tensor products of 1D orthogonal basis functions.
\end{remark}
}

\begin{table}[htp]
    \begin{center}
        \caption{Helmholtz equation: Results of FEM when solving the Helmholtz equation \eqref{eq:Helmholtz_equation} with solution in Equation \eqref{eq:Helmholtz_solution}. The $L_{\infty}$ errors and $L_{2}$ errors for each model configuration are presented.}
        \setlength\tabcolsep{2pt}
        \small{
        \begin{tabular}{ccccc}
            \hline\noalign{\smallskip}
            Method & $h$ & DoF & $e_{L_{\infty}}$ & $e_{L_{2}}$  \\
            \hline
            \multirow{5}{*}{FEM} & 0.1 & 1969 & 1.5408E-02 & 7.4130E-03 \\
              & 0.05 & 7577 & 1.5028E-03 & 8.9865E-04 \\
              & 0.02 & 46905 & 1.4795E-04 & 4.1033E-05 \\
              & 0.01 & 185913 & 1.8508E-05 & 5.0898E-06 \\
              & 0.005 & 741209 & 1.3809E-06 & 9.4240E-07 \\
            \hline
        \end{tabular}
        }
        \label{tab:error_Helmholtz2D_FEM}
    \end{center}
\end{table}


\chadded{
To further assess the approximation capabilities of FENs and to compare with cos/sin-based ELMs, we consider the following solution of Helmholtz equation \eqref{eq:Helmholtz_equation},
\begin{equation}
    \label{eq:Helmholtz_solution_2}
    u(x, y) = \tanh(xy),
\end{equation}
with the associated source term defined by
\begin{equation}
    \label{eq:Helmholtz_source_term_2}
    q(x, y)=(k^2-2x^2-2y^2) \tanh(xy) + (2x^2+2y^2) \tanh^3(xy).
\end{equation}
}
\chadded{
Unlike the  solution in Equation~\eqref{eq:Helmholtz_solution}, this exact function is closely related to the $\tanh$ activation function. Despite this, as shown in Table~\ref{tab:error_Helmholtz2D_2}, FENs still outperform ELMs activated by non-trigonometric function, particularly when the number of basis functions is small. The minimum $L_{\infty}$ and $L_2$ errors achieved by FENs are $3.2613 \times 10^{-15}$ and $2.4689 \times 10^{-15}$, respectively, compared to $4.1078 \times 10^{-15}$ and $3.8454 \times 10^{-15}$ achieved by ELMs with trigonometric activation function. They both achieve precise results, which demonstrate that activation function is very important during the computation.
}

\begin{table}[htp]
    \begin{center}
        \caption{Helmholtz equation: Parameters when solving the Helmholtz equation \eqref{eq:Helmholtz_equation} with solution in Equation \eqref{eq:Helmholtz_solution_2}.}
        \setlength\tabcolsep{2pt}
        \small{
        \begin{tabular}{ccccccc}
            \hline\noalign{\smallskip}
            \multirow{2}{*}{Activations} & \multirow{2}{*}{$(\rho_{min}, \rho_{max}]$} & \multirow{2}{*}{$\rho_s$} & \multicolumn{4}{c}{$\rho_{opt}$}  \\
            & &          & $M=400$ & $M=900$ & $M=1600$ & $M=2500$ \\
            \hline
            $\text{sigmoid}$      & $(0, 10]$  & 0.01 & 1.51 & 2.27 & 2.47 & 3.31  \\
            $\tanh$         & $(0, 10]$  & 0.01 & 0.73 & 1.06 & 1.53 & 1.59  \\
            $\text{swish}$        & $(0, 50]$  & 0.01 & 1.6  & 2.1  & 2.8  & 4.0  \\
            $\cos$          & $(0, 100]$ & 0.1  & 7.6  & 9.4  & 10.2 & 12.1  \\
            $\sin$          & $(0, 100]$ & 0.1  & 8.3  & 9.2  & 10.3 & 15.9  \\
            $\cos$ $\&$ $\sin$ & $(0, 100]$ & 0.1  & 7.5  & 9.7  & 11.2 & 23.6  \\
            \hline
        \end{tabular}
        }
        \label{tab:params_Helmholtz2D_2}
    \end{center}
\end{table}


\begin{table}[htp]
    \begin{center}
        \caption{Helmholtz equation: Performance comparison of FENs and ELMs activated by different functions, when solving the Helmholtz equation \eqref{eq:Helmholtz_equation} with solution \eqref{eq:Helmholtz_solution_2}. The $L_{\infty}$ errors and $L_{2}$ errors for each model configuration are presented.}
        \setlength\tabcolsep{2pt}
        \small{
        \begin{tabular}{ccccccccc}
            \hline\noalign{\smallskip}
            \multirow{2}{*}{Methods} & \multicolumn{2}{c}{M=400} & \multicolumn{2}{c}{M=900} & \multicolumn{2}{c}{M=1600} & \multicolumn{2}{c}{M=2500}  \\
            & $e_{L_{\infty}}$ & $e_{L_{2}}$ & $e_{L_{\infty}}$ & $e_{L_{2}}$ & $e_{L_{\infty}}$ & $e_{L_{2}}$ & $e_{L_{\infty}}$ & $e_{L_{2}}$     \\
            \hline
           ELMs ($\text{sigmoid})$      & 2.9184E-08 & 1.5916E-08 & 2.5498E-11 & 1.5287E-11 & 2.4958E-13 & 1.5588E-13 & 2.0040E-14 & 2.1436E-14  \\
           ELMs ($\text{tanh})$         & 2.3647E-08 & 1.2001E-08 & 1.6025E-11 & 1.0172E-11 & 1.8674E-13 & 1.5324E-13 & 1.1768E-14 & 9.1806E-15  \\
            ELMs ($\text{swish})$        & 2.5343E-08 & 1.0115E-08 & 1.7768E-11 & 1.8566E-11 & 3.4794E-13 & 3.9641E-13 & 3.8497E-14 & 5.8373E-14  \\
            ELMs ($\sin$) & 1.0436E-09 & 4.3592E-10 & 1.6043E-14 & 8.7617E-15 & 4.1078E-15 & 3.8454E-15 & 4.2188E-15 & 3.9039E-15  \\
              FENs ($\cos$ $\&$ $\sin$) & 4.0931E-10 & 1.6929E-10 & 1.2101E-14 & 9.6805E-15 & 3.2613E-15 & 2.4689E-15 & 3.7192E-15 & 2.2473E-15  \\
            \hline
        \end{tabular}
        }
        \label{tab:error_Helmholtz2D_2}
    \end{center}
\end{table}


\subsection{Diffusion equation}
In this subsection, we investigate the behavior of a two-dimensional diffusion equation that involves both spatial and temporal variables.
The initial-boundary value problem is governed by
\begin{equation}
    \label{eq:Diffusion_equation}
    \begin{array}{r@{}l}
        \left\{
        \begin{aligned}
            \frac{\partial u}{\partial t} - \nu \frac{\partial^2 u}{\partial x^2} & = f(x, t), &  & (x, t) \in \ (a_1, b_1) \times (0, t_f],          \\
            u(a_1, t)             & = g_1(t),        &  & t \in(0, t_f], \\
            u(b_1, t)             & = g_2(t),        &  & t \in(0, t_f], \\
            u(x, 0)             & = h(x),        &  & x \in [a_1,b_1], \\
        \end{aligned}
        \right.
    \end{array}
\end{equation}
where $f(x ,t)$ denotes the source term, $\nu > 0$ denotes the constant diffusion coefficient, $g_1(t)$ and $g_2(t)$ prescribe the time-dependent Dirichlet boundary conditions, and $h(x)$ defines the initial condition. 
The parameters are chosen as $a_1=0$, $b_1=5$ and $\nu=0.01$. The final time $t_f$ considered in simulations is set to be $1$. 

We choose the suitable functions $f(x,t)$, $g_1(t)$, $g_2(t)$, and the initial condition $h(x)$ so that the exact solution is given by
\begin{equation}
    \label{eq:Diffusion_equation_solution}
    u(x, t)=\left[2 \cos\left(\pi x+\frac{\pi}{5}\right) + \frac{3}{2}\cos\left(2\pi x - \frac{3\pi}{5}\right)\right]\left[2 \cos\left(\pi t+\frac{\pi}{5}\right) + \frac{3}{2} \cos\left(2\pi t - \frac{3\pi}{5}\right)\right].
\end{equation}
By treating time as a spatial dimension, we convert the 1D time-dependent diffusion equation into a 2D problem. Both FENs and ELMs are trained on a $N_x \times N_y = 101 \times 101$ collocation points. 
Table~\ref{tab:params_Diffusion} lists the relevant parameters and the optimal scaling factor used in this setup. Table~\ref{tab:error_Diffusion} compares the approximate errors of the solutions obtained by FENs and ELMs for different numbers of basis functions. Figure~\ref{fig:error_Diffusion} illustrates the $L_{\infty}$ error curves, showing significantly lower errors for FENs than ELMs.

Furthermore, by examining Table~\ref{tab:error_Diffusion}, we observe that the minimal  $L_{\infty}$ and $L_2$ errors achieved by FENs are $9.2371 \times 10^{-14}$ and $5.9186 \times 10^{-15}$, respectively. In contrast, the minimal  $L_{\infty}$ and $L_2$ errors obtained by ELMs are $5.7246 \times 10^{-9}$ and $3.8524 \times 10^{-10}$, respectively. These results demonstrate that FENs are more suitable for solving this problem compared to ELMs.

\begin{table}[htp]
    \begin{center}
        \caption{Diffusion equation: Parameters when solving the diffusion equation \eqref{eq:Diffusion_equation} with $t_f=1$.}
        \setlength\tabcolsep{2pt}
        \small{
        \begin{tabular}{ccccccc}
            \hline\noalign{\smallskip}
            \multirow{2}{*}{Activations} & \multirow{2}{*}{$(\rho_{min}, \rho_{max}]$} & \multirow{2}{*}{$\rho_s$} & \multicolumn{4}{c}{$\rho_{opt}$}  \\
            & &          & $M=400$ & $M=900$ & $M=1600$ & $M=2500$ \\
            \hline
            $\text{sigmoid}$      & $(0, 10]$  & 0.01 & 1.55 & 1.86 & 2.61 & 3.35  \\
            $\tanh$         & $(0, 10]$  & 0.01 & 0.9  & 1.01 & 1.38 & 2.03  \\
            $\text{swish}$        & $(0, 50]$  & 0.01 & 1.55 & 2.5  & 2.53 & 3.96  \\
            $\cos$          & $(0, 100]$ & 0.1  & 5.5  & 5.0  & 5.4  & 6.4  \\
            $\sin$          & $(0, 100]$ & 0.1  & 4.0  & 5.5  & 5.3  & 6.6  \\
            $\cos$ $\&$ $\sin$ & $(0, 100]$ &  0.1 & 5.6  & 5.2  & 5.4  & 5.3  \\
            \hline
        \end{tabular}
        }
        \label{tab:params_Diffusion}
    \end{center}
\end{table}

\begin{figure}[htbp]
    \begin{minipage}{0.9\linewidth}
        \centering
        \includegraphics[width=0.8\textwidth]{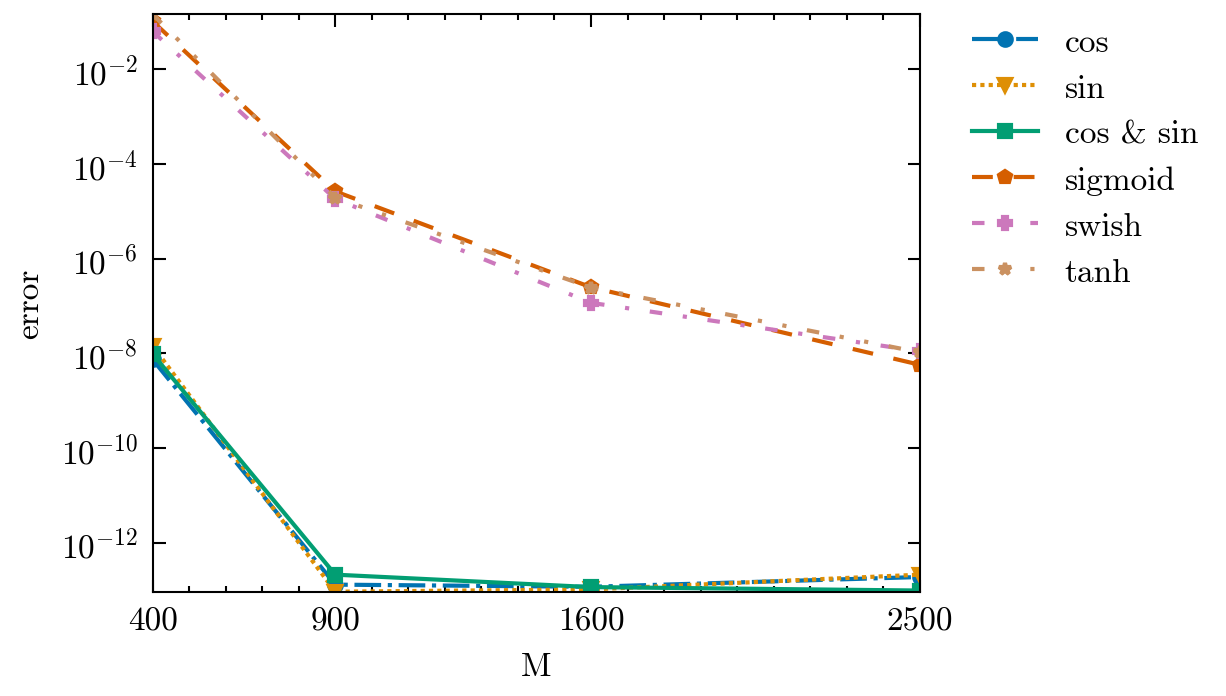}
    \end{minipage}
    \caption{Diffusion equation: the $L_{\infty}$ errors of neural networks when solving diffusion equation \eqref{eq:Diffusion_equation} with $t_f=1$.}
    \label{fig:error_Diffusion}
\end{figure}

\begin{table}[htp]
    \begin{center}
        \caption{Diffusion equation: Performance comparison of FENs and ELMs activated by $\text{sigmoid}$, $\tanh$ and $\text{swish}$ when solving the diffusion equation \eqref{eq:Diffusion_equation} with $t_f=1$.  The $L_{\infty}$ errors and $L_{2}$ errors for each model configuration are presented.}
        \setlength\tabcolsep{2pt}
        \small{
        \begin{tabular}{ccccccccc}
            \hline\noalign{\smallskip}
            \multirow{2}{*}{Activations} & \multicolumn{2}{c}{M=400} & \multicolumn{2}{c}{M=900} & \multicolumn{2}{c}{M=1600} & \multicolumn{2}{c}{M=2500}  \\
            & $e_{L_{\infty}}$ & $e_{L_{2}}$ & $e_{L_{\infty}}$ & $e_{L_{2}}$ & $e_{L_{\infty}}$ & $e_{L_{2}}$ & $e_{L_{\infty}}$ & $e_{L_{2}}$     \\
            \hline
            $\text{sigmoid}$      & 1.0276E-01 & 7.3530E-03 & 2.6306E-05 & 2.4873E-06 & 2.4815E-07 & 1.8386E-08 & 5.7246E-09 & 3.8524E-10  \\
            $\tanh$         & 1.4518E-01 & 7.2468E-03 & 1.9460E-05 & 1.1581E-06 & 2.4414E-07 & 1.7353E-08 & 1.0454E-08 & 4.3533E-10  \\
            $\text{swish}$        & 6.3047E-02 & 4.5548E-03 & 1.8239E-05 & 1.5684E-06 & 1.1645E-07 & 6.2690E-09 & 1.1023E-08 & 9.1794E-10  \\
            $\cos$          & 7.4218E-09 & 5.5655E-10 & 1.3012E-13 & 5.9433E-15 & 1.1724E-13 & 9.1437E-15 & 1.8852E-13 & 9.1456E-15  \\
            $\sin$          & 1.4405E-08 & 8.5286E-10 & 9.2371E-14 & 5.9186E-15 & 1.0303E-13 & 8.3162E-15 & 2.1139E-13 & 6.9327E-15  \\
            $\cos$ $\&$ $\sin$ & 9.6760E-09 & 1.3414E-10 & 2.1139E-13 & 1.1447E-14 & 1.1546E-13 & 1.3558E-14 & 9.7700E-14 & 8.3418E-15  \\
            \hline
        \end{tabular}
        }
        \label{tab:error_Diffusion}
    \end{center}
\end{table}
\subsection{Heat equation}
We consider the heat equation within the spatial-temporal domain $\Omega \times (0, t_f] = (0, 1)^2 \times (0, 1]$, governed by the following system of PDEs
\begin{equation}
    \label{eq:Heat_equation}
    \begin{array}{r@{}l}
        \left\{
        \begin{aligned}
            u_t(x,y,t) -\Delta u(x,y,t) & = f(x,y,t), &  & (x,y,t) \in \Omega  \times (0, 1],          \\
            u(x,y,t)             & = g(x,y,t),        &  & (x,y,t) \in \partial \Omega \times (0, 1], \\
            u(x, y, 0)             & = h(x, y),        &  & (x, y) \in \Omega. \\
        \end{aligned}
        \right.
    \end{array}
\end{equation}
We choose the suitable functions $f(x,y,t)$, $g(x,y,t)$, and the initial condition $h(x,y,t)$ so that the exact solution is given by
$$u(x, y, t)=2e^{-t} \sin\left(\frac{\pi}{2}x\right) \sin\left(\frac{\pi}{2}y\right).$$

For the training phase, a uniform grid of $N_x \times N_y \times N_t = 51 \times 51 \times 51$ collocation points is employed. The parameters and results from the optimal scale search are summarized in Table~\ref{tab:params_Heat}. Table~\ref{tab:error_Heat} presents the $L_{\infty}$ and $L_2$ errors for both FENs and ELMs across varying numbers of basis functions. The $L_{\infty}$ error curves depicted in Figure~\ref{fig:error_Heat} clearly demonstrate that FENs consistently yield lower approximation errors compared to ELMs, highlighting their superior performance in solving the heat equation.

In terms of specific approximation error values, when the number of basis functions is sufficiently large, FENs can achieve the smallest $L_{\infty}$ and $L_2$ errors of $2.6090 \times 10^{-14}$ and $1.7562 \times 10^{-14}$, respectively. In contrast, ELMs reach their smallest $L_{\infty}$ and $L_2$ errors of $2.4443 \times 10^{-12}$ and $1.4761 \times 10^{-12}$, respectively. This significant difference in error magnitudes clearly demonstrates that FENs can provide solutions with considerably higher precision than ELMs, making them a more suitable method for solving this problem.

\begin{table}[htp]
    \begin{center}
        \caption{Heat equation: Parameters when solving the heat equation \eqref{eq:Heat_equation}.}
        \setlength\tabcolsep{2pt}
        \small{
        \begin{tabular}{ccccccc}
            \hline\noalign{\smallskip}
            \multirow{2}{*}{Activations} & \multirow{2}{*}{$(\rho_{min}, \rho_{max}]$} & \multirow{2}{*}{$\rho_s$} & \multicolumn{4}{c}{$\rho_{opt}$}  \\
            & &          & $M=400$ & $M=900$ & $M=1600$ & $M=2500$ \\
            \hline
            $\text{sigmoid}$      & $(0, 5]$  & 0.01 & 0.15 & 0.3  & 0.46 & 0.56  \\
            $\tanh$         & $(0, 5]$  & 0.01 & 0.07 & 0.14 & 0.19 & 0.22  \\
            $\text{swish}$        & $(0, 5]$  & 0.01 & 0.12 & 0.21 & 0.31 & 0.48  \\
            $\cos$          & $(0, 50]$  & 0.1  & 1.1  & 1.4  & 2.2  & 1.7  \\
            $\sin$          & $(0, 50]$  & 0.1  & 1.2  & 1.3  & 1.6  & 2.4  \\
            $\cos$ $\&$ $\sin$ & $(0, 50]$  & 0.1  & 1.1  & 1.3  & 1.6  & 2.0  \\
            \hline
        \end{tabular}
        }
        \label{tab:params_Heat}
    \end{center}
\end{table}

\begin{figure}[htbp]
    \begin{minipage}{0.9\linewidth}
        \centering
        \includegraphics[width=0.8\textwidth]{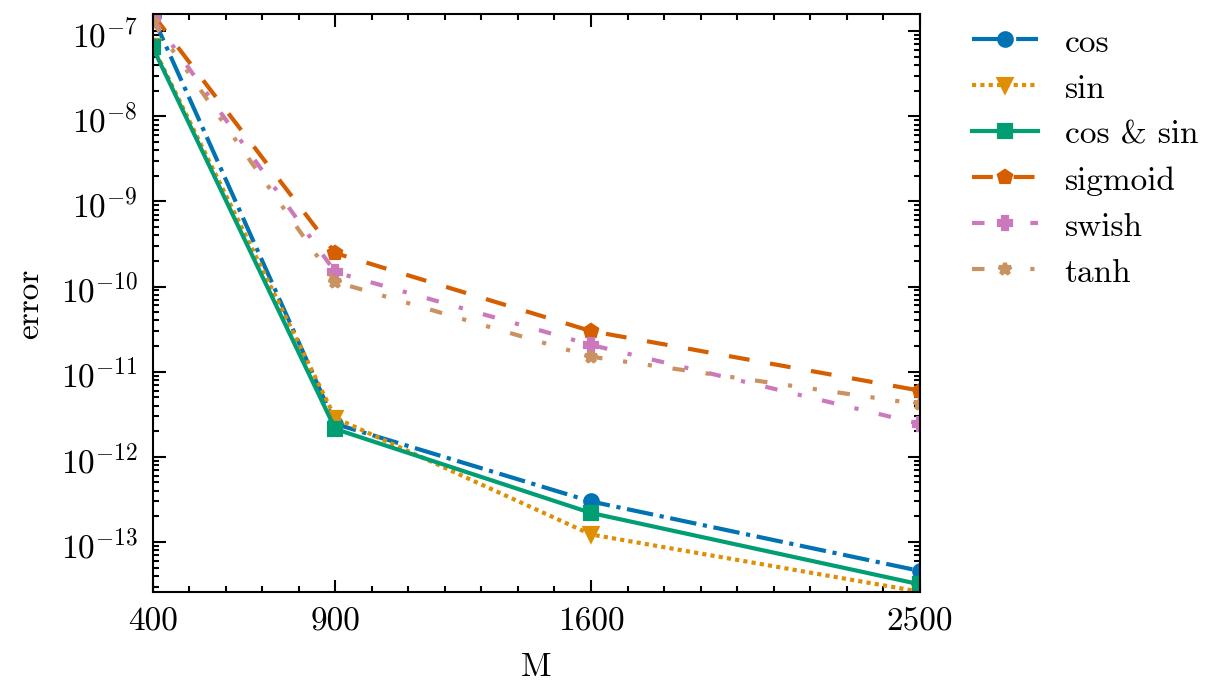}
    \end{minipage}
    \caption{Heat equation: $L_{\infty}$ errors of neural networks when solving the heat equation \eqref{eq:Heat_equation}.}
    \label{fig:error_Heat}
\end{figure}

\begin{table}[htp]
    \begin{center}
        \caption{Heat equation: Performance comparison of FENs and ELMs activated by $\text{sigmoid}$, $\tanh$ and $\text{swish}$ when solving the heat equation \eqref{eq:Heat_equation}. The $L_{\infty}$ errors and $L_{2}$ errors for each model configuration are presented.}
        \setlength\tabcolsep{2pt}
        \small{
        \begin{tabular}{ccccccccc}
            \hline\noalign{\smallskip}
            \multirow{2}{*}{Activations} & \multicolumn{2}{c}{M=400} & \multicolumn{2}{c}{M=900} & \multicolumn{2}{c}{M=1600} & \multicolumn{2}{c}{M=2500}  \\
            & $e_{L_{\infty}}$ & $e_{L_{2}}$ & $e_{L_{\infty}}$ & $e_{L_{2}}$ & $e_{L_{\infty}}$ & $e_{L_{2}}$ & $e_{L_{\infty}}$ & $e_{L_{2}}$     \\
            \hline
            $\text{sigmoid}$      & 1.5832E-07 & 7.3817E-08 & 2.4738E-10 & 2.3643E-10 & 3.0013E-11 & 2.2705E-11 & 5.9936E-12 & 3.5015E-12  \\
            $\tanh$         & 1.2910E-07 & 5.3362E-08 & 1.1386E-10 & 5.7319E-11 & 1.5007E-11 & 7.7614E-12 & 4.1442E-12 & 3.0037E-12  \\
            $\text{swish}$        & 1.4388E-07 & 7.5731E-08 & 1.4778E-10 & 7.2960E-11 & 2.0669E-11 & 8.1287E-12 & 2.4443E-12 & 1.4761E-12  \\
            $\cos$          & 1.5015E-07 & 4.9863E-08 & 2.4385E-12 & 1.2761E-12 & 2.9932E-13 & 1.2883E-13 & 4.5741E-14 & 2.8907E-14  \\
            $\sin$          & 6.6993E-08 & 3.8847E-08 & 2.8706E-12 & 1.1786E-12 & 1.2257E-13 & 9.7663E-14 & 2.6090E-14 & 1.7562E-14  \\
            $\cos$ $\&$ $\sin$ & 6.4772E-08 & 2.3952E-08 & 2.1253E-12 & 9.1364E-13 & 2.1938E-13 & 1.3437E-13 & 3.1822E-14 & 3.0063E-14  \\
            \hline
        \end{tabular}
        }
        \label{tab:error_Heat}
    \end{center}
\end{table}

\subsection{Wave equation}
We consider the following wave equation defined over the spatial domain $\Omega = (0, 1)^2$ and the temporal domain $(0, 1]$, 
\begin{equation}
    \label{eq:Wave_equation}
    \begin{array}{r@{}l}
        \left\{
        \begin{aligned}
            \frac{\partial ^2 u}{\partial t^2} -\Delta u(x,y,t) & = f(x,y,t), &  & (x,y,t) \in \Omega  \times (0, 1],          \\
            u(x,y,t)             & = g(x,y,t),        &  & (x,y,t) \in \partial \Omega \times (0, 1], \\
            u(x, y, 0)             & = h(x, y),        &  & (x, y) \in \Omega, \\
            \frac{\partial u}{\partial t} (x, y, 0) & = w(x, y), & & (x, y) \in \Omega. \\
        \end{aligned}
        \right.
    \end{array}
\end{equation}
The exact solution is given by $u(x, y, t)=\sin\left(\frac{\pi}{2}x\right) \sin\left(\frac{\pi}{2}y\right) \sin\left(\frac{\pi}{2}t\right)$, 
with suitable boundary condition $g(x, y, t)$, initial conditions $h(x, y, t)$ and $w(x, y)$, as well as the source term $f(x, y, t)$.
We employ a uniform grid of $N_x \times N_y \times N_t = 51 \times 51 \times 51$ collocation points.

Following the same procedure as in previous sections, we conduct an optimal scale search across varying numbers of basis functions. The corresponding parameters are listed in Table~\ref{tab:params_Wave}. The resulting errors obtained using FENs and ELMs with their respective optimal scaling factors are presented in Table~\ref{tab:error_Wave}. Figure~\ref{fig:error_Wave} illustrates the $L_{\infty}$ and $L_2$ error curves for both computational approaches, clearly showing that FENs maintain superior error-reduction capabilities throughout all test configurations when benchmarked against ELMs.

Furthermore, as quantitatively demonstrated in Table~\ref{tab:error_Wave}, the minimum $L_{\infty}$ and $L_2$ errors achieved by FENs are $2.4425 \times 10^{-15}$ and $1.7659 \times 10^{-15}$, respectively. In contrast, minimum $L_{\infty}$ and $L_2$ errors achieved by ELMs are $1.4412 \times 10^{-12}$ and $5.1755 \times 10^{-13}$, respectively. This systematic comparison conclusively establishes FENs' superior efficacy in wave equation solutions, with demonstrably higher computational precision compared to ELMs. 

\begin{table}[htp]
    \begin{center}
        \caption{Wave equation: Parameters when solving the wave equation \eqref{eq:Wave_equation}.}
        \setlength\tabcolsep{2pt}
        \small{
        \begin{tabular}{ccccccc}
            \hline\noalign{\smallskip}
            \multirow{2}{*}{Activations} & \multirow{2}{*}{$(\rho_{min}, \rho_{max}]$} & \multirow{2}{*}{$\rho_s$} & \multicolumn{4}{c}{$\rho_{opt}$}  \\
            & &          & $M=400$ & $M=900$ & $M=1600$ & $M=2500$ \\
            \hline
            $\text{sigmoid}$      & $(0, 5]$   & 0.01 & 0.17 & 0.27 & 0.5  & 0.54  \\
            $\tanh$         & $(0, 5]$   & 0.01 & 0.06 & 0.13 & 0.18 & 0.24  \\
            $\text{swish}$        & $(0, 5]$   & 0.01 & 0.13 & 0.24 & 0.35 & 0.49  \\
            $\cos$          & $(0, 10]$  & 0.1  & 1.28 & 1.34 & 1.46 & 1.7  \\
            $\sin$          & $(0, 10]$  & 0.1  & 1.3  & 1.38 & 1.47 & 1.39  \\
            $\cos$ $\&$ $\sin$ & $(0, 10]$  & 0.1  & 1.16 & 1.28 & 1.62 & 1.74  \\
            \hline
        \end{tabular}
        }
        \label{tab:params_Wave}
    \end{center}
\end{table}

\begin{figure}[htbp]
    \begin{minipage}{0.9\linewidth}
        \centering
        \includegraphics[width=0.8\textwidth]{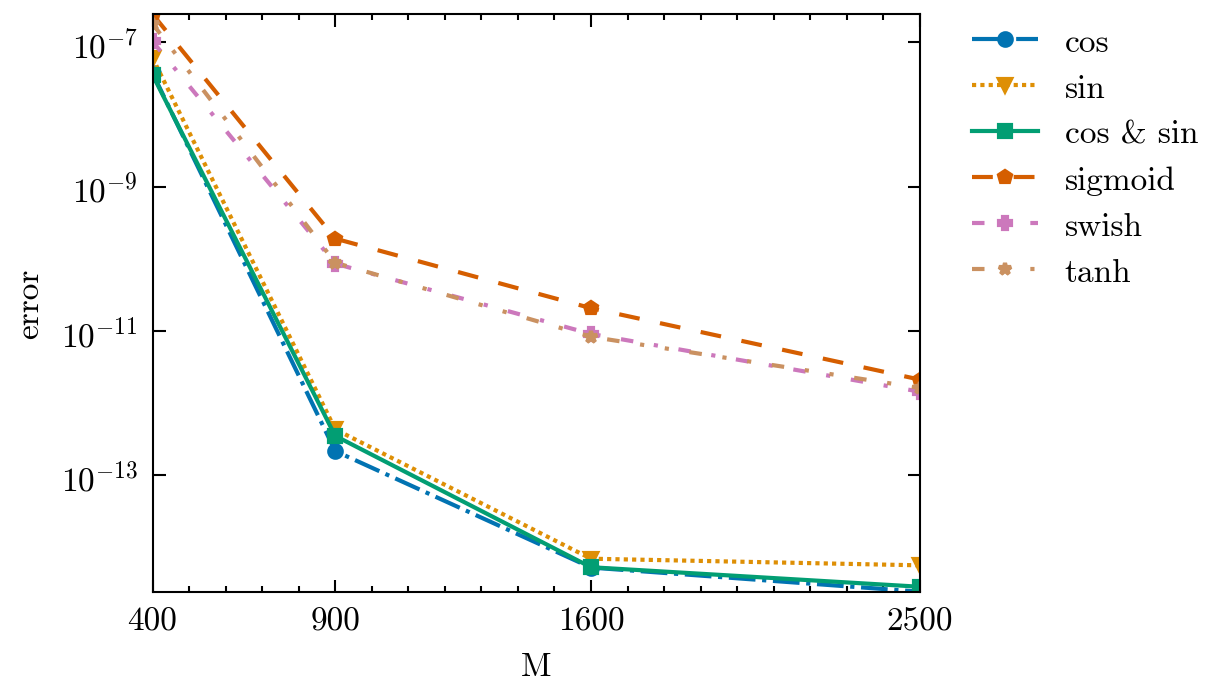}
    \end{minipage}
    \caption{Wave equation: $L_{\infty}$ errors of neural networks when solving the wave equation \eqref{eq:Wave_equation}.}
    \label{fig:error_Wave}
\end{figure}

\begin{table}[htp]
    \begin{center}
        \caption{Wave equation: Performance comparison of FENs and ELMs activated by $\text{sigmoid}$, $\tanh$ and $\text{swish}$ when solving the wave equation \eqref{eq:Wave_equation}. The $L_{\infty}$ errors and $L_{2}$ errors for each model configuration are presented.}
        \setlength\tabcolsep{2pt}
        \small{
        \begin{tabular}{ccccccccc}
            \hline\noalign{\smallskip}
            \multirow{2}{*}{Activations} & \multicolumn{2}{c}{M=400} & \multicolumn{2}{c}{M=900} & \multicolumn{2}{c}{M=1600} & \multicolumn{2}{c}{M=2500}  \\
            & $e_{L_{\infty}}$ & $e_{L_{2}}$ & $e_{L_{\infty}}$ & $e_{L_{2}}$ & $e_{L_{\infty}}$ & $e_{L_{2}}$ & $e_{L_{\infty}}$ & $e_{L_{2}}$     \\
            \hline
            $\text{sigmoid}$      & 2.4436E-07 & 6.4600E-08 & 1.9004E-10 & 1.3386E-10 & 2.0691E-11 & 7.1025E-12 & 2.1174E-12 & 1.0359E-12  \\
            $\tanh$         & 1.7975E-07 & 3.5759E-08 & 8.8139E-11 & 4.3830E-11 & 8.2991E-12 & 3.5998E-12 & 1.6200E-12 & 5.1755E-13  \\
            $\text{swish}$        & 1.0241E-07 & 3.8084E-08 & 8.5493E-11 & 3.3717E-11 & 9.0949E-12 & 4.1869E-12 & 1.4412E-12 & 6.2451E-13  \\
            $\cos$          & 3.7392E-08 & 1.6164E-08 & 2.1672E-13 & 1.2717E-13 & 5.2736E-15 & 2.3601E-15 & 2.4425E-15 & 1.7659E-15  \\
            $\sin$          & 6.0666E-08 & 2.4112E-08 & 4.3354E-13 & 2.1115E-13 & 6.9944E-15 & 3.3327E-15 & 5.6621E-15 & 3.9232E-15  \\
            $\cos$ $\&$ $\sin$ & 3.5241E-08 & 1.0682E-08 & 3.5172E-13 & 1.5920E-13 & 5.3291E-15 & 2.0016E-15 & 2.8588E-15 & 2.6000E-15  \\
            \hline
        \end{tabular}
        }
        \label{tab:error_Wave}
    \end{center}
\end{table}

\subsection{Nonlinear Helmholtz equation}
For the nonlinear example, we evaluate the performance of FENs and ELMs on a boundary value problem governed by the one-dimensional nonlinear Helmholtz equation. The formulation is defined as follows
\begin{equation}
    \label{eq:nonlinear_Helmholtz_equation}
    \begin{array}{r@{}l}
        \left\{
        \begin{aligned}
            \frac{\partial ^2 u}{\partial x^2} -\lambda u + \beta \sin(u) & = f(x), &  & x \in (a, b),          \\
            u(a)             & = h_1,       \\
            u(b)             & = h_2.   \\
        \end{aligned}
        \right.
    \end{array}
\end{equation}
The exact solution is given by
$u(x)=\sin\left(3\pi x + \frac{3\pi}{20}\right) \cos\left(4\pi x - \frac{2\pi}{5}\right) + \frac{3}{2} + \frac{x}{10}$,
with suitable boundary conditions and source term $f(x)$. 
The constant parameters in the equation are set as $a=0$, $b=8$, $\lambda=50$ and $\beta=10$.

In this nonlinear case, we use $N_x = 3000$ uniform collocation points for training. To handle the nonlinearity of the problem, we adopt the Picard iteration method, performing a total of $100$ iterations. Initially, a vector of coefficients $\boldsymbol{w}_0$ is randomly initialized, generating‌ the approximate solution $u_0 = \Phi \cdot \boldsymbol{w}_0$. The nonlinear term $\beta \sin(u_0)$ is then computed  and moved to the right-hand side of the equation, thereby linearizing the problem ‌for the current iteration step‌. Solving the resulting linear system yields an updated coefficient vector $\boldsymbol{w}_1$. This iterative process continues, updating the coefficients at each step until convergence is achieved. The final approximate solution is expressed as $u = \Phi \cdot \boldsymbol{w}$, where $\boldsymbol{w}$ is the ‌converged‌ coefficient vector.

For the nonlinear Helmholtz problem, the parameters ‌along with‌ the corresponding optimal scaling factor ‌identified via scale search‌ are ‌listed‌ in Table~\ref{tab:params_Helmholtz1D}. The resulting $L_{\infty}$ and $L_2$ errors from FENs and ELMs using the optimal scaling factor‌ are ‌presented‌ in Table~\ref{tab:error_Helmholtz1D}. Figure~\ref{fig:error_Helmholtz1D} displays the $L_{\infty}$ error curves for both methods. As clearly shown in the figure‌, the error curves ‌for‌ FENs are consistently and significantly lower than ‌those for‌ ELMs, ‌demonstrating‌ the superior accuracy and robustness of FENs in solving this nonlinear problem.

For the nonlinear Helmholtz problem ‌studied‌, the ELM with the $\tanh$ activation function ‌achieves‌ clearly ‌superior‌ performance compared to ‌networks using‌ $\text{sigmoid}$ and $\text{swish}$, ‌delivering‌ notably higher accuracy ‌while still lagging behind FENs‌. 
This ‌demonstrates‌ that the choice of activation function ‌plays a critical role‌ in the efficacy of ELMs, with $\tanh$ ‌providing‌ a distinct advantage ‌in this specific context‌. ‌However‌, FENs ‌maintain‌ superior representational ‌capacity‌ and precision, ‌highlighting‌ their robustness in addressing ‌this class of‌ nonlinear problems.

Upon examining the approximation error values in Table~\ref{tab:error_Helmholtz1D}, we observe that the smallest $L_{\infty}$ and $L_2$ errors achieved by FENs are $2.2427 \times 10^{-13}$ and $1.3705 \times 10^{-15}$, respectively. In contrast, the lowest $L_{\infty}$ and $L_2$ errors obtained by ELMs are $3.7986 \times 10^{-11}$ and $4.4912 \times 10^{-12}$, respectively. These results clearly demonstrate that FENs are more suitable for solving this nonlinear problem and are capable of achieving significantly higher solution accuracy compared to ELMs.

\begin{table}[htp]
    \begin{center}
        \caption{Nonlinear Helmholtz equation: Parameters when solving the nonlinear Helmholtz equation \eqref{eq:nonlinear_Helmholtz_equation}.}
        \setlength\tabcolsep{2pt}
        \small{
        \begin{tabular}{ccccccc}
            \hline\noalign{\smallskip}
            \multirow{2}{*}{Activations} & \multirow{2}{*}{$(\rho_{min}, \rho_{max}]$} & \multirow{2}{*}{$\rho_s$} & \multicolumn{4}{c}{$\rho_{opt}$}  \\
            & &          & $M=400$ & $M=900$ & $M=1600$ & $M=2500$ \\
            \hline
            $\text{sigmoid}$      & $(0, 10]$  & 0.01 & 9.65 & 9.99 & 9.95 & 9.72  \\
            $\tanh$         & $(0, 10]$  & 0.01 & 9.65 & 9.71 & 9.9  & 9.98  \\
            $\text{swish}$        & $(0, 10]$  & 0.01 & 10.0 & 9.94 & 9.94 & 9.84  \\
            $\cos$          & $(0, 100]$ & 1    & 16   & 33   & 34   & 64  \\
            $\sin$          & $(0, 100]$ & 1    & 20   & 34   & 24   & 74  \\
            $\cos$ $\&$ $\sin$ & $(0, 100]$ & 1    & 22   & 25   & 33   & 71  \\
            \hline
        \end{tabular}
        }
        \label{tab:params_Helmholtz1D}
    \end{center}
\end{table}

\begin{figure}[htbp]
    \begin{minipage}{0.9\linewidth}
        \centering
        \includegraphics[width=0.8\textwidth]{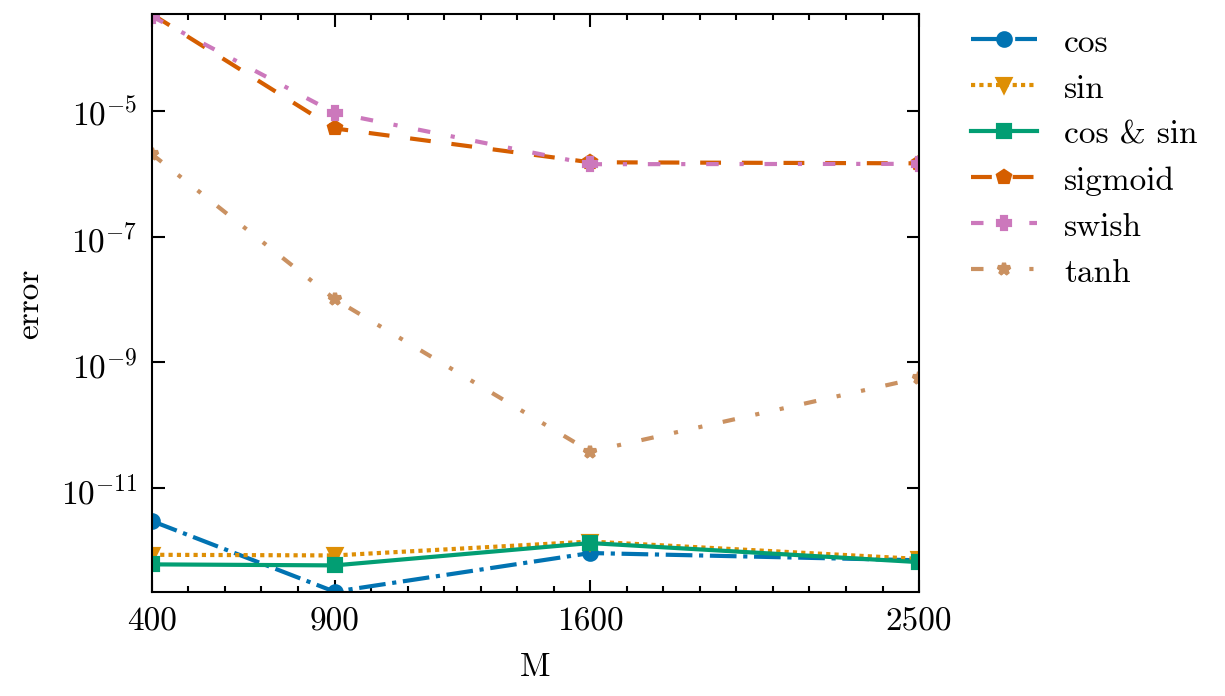}
    \end{minipage}
    \caption{Nonlinear Helmholtz equation: $L_{\infty}$ errors of neural networks when solving the nonlinear Helmholtz equation \eqref{eq:nonlinear_Helmholtz_equation}.}
    \label{fig:error_Helmholtz1D}
\end{figure}

\begin{table}[htp]
    \begin{center}
        \caption{Nonlinear Helmholtz equation: Performance comparison of FENs and ELMs activated by $\text{sigmoid}$, $\tanh$ and $\text{swish}$ when solving the nonlinear Helmholtz equation \eqref{eq:nonlinear_Helmholtz_equation}. The $L_{\infty}$ errors and $L_{2}$ errors for each model configuration are presented.}
        \setlength\tabcolsep{2pt}
        \small{
        \begin{tabular}{ccccccccc}
            \hline\noalign{\smallskip}
            \multirow{2}{*}{Activations} & \multicolumn{2}{c}{M=400} & \multicolumn{2}{c}{M=900} & \multicolumn{2}{c}{M=1600} & \multicolumn{2}{c}{M=2500}  \\
            & $e_{L_{\infty}}$ & $e_{L_{2}}$ & $e_{L_{\infty}}$ & $e_{L_{2}}$ & $e_{L_{\infty}}$ & $e_{L_{2}}$ & $e_{L_{\infty}}$ & $e_{L_{2}}$     \\
            \hline
            $\text{sigmoid}$      & 3.4384E-04 & 1.6943E-05 & 5.2436E-06 & 2.4732E-07 & 1.5165E-06 & 7.1510E-08 & 1.4590E-06 & 7.7030E-08  \\
            $\tanh$         & 2.0442E-06 & 1.0492E-07 & 1.0126E-08 & 4.7025E-10 & 3.7986E-11 & 4.4912E-12 & 5.6331E-10 & 3.1179E-11  \\
            $\text{swish}$        & 3.1724E-04 & 1.9411E-05 & 9.2887E-06 & 5.5102E-07 & 1.4125E-06 & 6.6785E-08 & 1.4330E-06 & 8.1166E-08  \\
            $\cos$          & 2.9874E-12 & 1.5546E-13 & 2.2427E-13 & 1.3705E-14 & 9.2371E-13 & 4.8289E-14 & 7.0832E-13 & 3.5773E-14  \\
            $\sin$          & 8.6642E-13 & 4.2264E-14 & 8.4621E-13 & 4.6449E-14 & 1.4064E-12 & 6.4684E-14 & 7.4474E-13 & 3.7450E-14  \\
            $\cos$ $\&$ $\sin$ & 6.0973E-13 & 2.8716E-14 & 5.8753E-13 & 3.4123E-14 & 1.3309E-12 & 6.4362E-14 & 6.6214E-13 & 3.1905E-14  \\
            \hline
        \end{tabular}
        }
        \label{tab:error_Helmholtz1D}
    \end{center}
\end{table}

\subsection{Poisson equation with an oscillating solution}
We consider the following one-dimensional Poisson equation:
\begin{equation}
    \label{eq:Poisson_equation_sharp}
    \begin{array}{r@{}l}
        \left\{
        \begin{aligned}
            -\Delta u(x) & = f(x), &  & \mbox{in} \enspace \Omega,          \\
            u(x)             & = g(x),        &  & \mbox{on} \enspace \partial \Omega,
        \end{aligned}
        \right.
    \end{array}
\end{equation}
where $\Omega=(0, 1)$.  We choose the suitable $f(x)$ and boundary condition $g(x)$ so that the exact solution is given by
\begin{equation}
    \label{eq:Poisson_equation_oscillating_solution}
    u(x) = \frac{1}{6} \sum_{i=1}^{6} \sin (2^i \pi x),
\end{equation}
which represents a superposition of sine functions with exponentially increasing frequencies, producing a highly oscillatory ‌behavior. 

We train the networks using $N_x = 3000$ uniformly distributed collocation points. The parameters associated with different neural architectures‌ during the optimal scaling factor search across different numbers of basis functions, are summarized in Table~\ref{tab:error_Poisson1D}. Notably, the admissible range for the activation function $\tanh$ is narrower than those of $\text{sigmoid}$ and $\text{swish}$. This constraint ‌originates from‌ numerical stability requirements: ‌applying‌ excessively large scaling factors to $\tanh$ ‌generates‌ ‌severely‌ ill-conditioned matrices (‌approaching singularity‌), ‌which destabilizes the solving process. ‌To mitigate this, the $\tanh$ scaling range is ‌strategically restricted‌ to maintain ‌trainable system ‌conditions.

Table~\ref{tab:params_Poisson1D} compiles the optimal scaling factors ‌and their associated search parameters. Furthermore, Figure~\ref{fig:Poisson1D} shows a comparison between the exact solution and the numerical solution obtained using the FEN with the $\cos$ activation function, where the number of basis functions is set to $M = 900$ and the optimal scaling factor is $\rho_{\text{opt}} = 130$. It is observed that the two curves exhibit an almost perfect overlap, demonstrating that the neural network has successfully captured the exact solution with high accuracy.

Figure~\ref{fig:error_Poisson1D} quantifies the $L_{\infty}$ 
convergence of neural networks ‌equipped with‌ optimal scaling factors for solving the ‌highly oscillatory‌ Poisson equation~\eqref{eq:Poisson_equation_oscillating_solution}. It is clearly observed that the errors of ELMs with $\text{sigmoid}$, $\text{swish}$, and $\tanh$ activation functions are significantly higher than those of FENs. In Table~\ref{tab:error_Poisson1D}, we report the $L_{\infty}$ and $L_2$ errors corresponding to various numbers of basis functions. The minimum $L_{\infty}$ and $L_2$ errors achieved by ELMs are $6.1572 \times 10^{-8}$ and $8.0396 \times 10^{-8}$, respectively, whereas FENs achieve notably smaller errors of $1.3878 \times 10^{-11}$ and $3.0395 \times 10^{-11}$. These results highlight the superior representational power of FENs and demonstrate their greater suitability for solving this highly oscillatory Poisson problem.

\begin{table}[htp]
    \begin{center}
        \caption{Poisson equation with an oscillating solution: Parameters when solving the Poisson equation \eqref{eq:Poisson_equation_sharp}.}
        \setlength\tabcolsep{2pt}
        \small{
        \begin{tabular}{ccccccc}
            \hline\noalign{\smallskip}
            \multirow{2}{*}{Activations} & \multirow{2}{*}{$(\rho_{min}, \rho_{max}]$} & \multirow{2}{*}{$\rho_s$} & \multicolumn{4}{c}{$\rho_{opt}$}  \\
            & &          & $M=400$ & $M=900$ & $M=1600$ & $M=2500$ \\
            \hline
            $\text{sigmoid}$      & $(0, 20]$  & 0.01 & 18.79 & 18.12 & 18.46 & 19.19  \\
            $\tanh$         & $(0, 10]$  & 0.01 & 9.15 & 9.83 & 9.93 & 9.70  \\
            $\text{swish}$        & $(0, 20]$  & 0.01 & 19.90 & 19.33 & 19.39 & 18.95  \\
            $\cos$          & $(0, 1000]$ & 1    & 176 & 130 & 205 & 564  \\
            $\sin$          & $(0, 1000]$ & 1    & 147 & 150 & 234 & 890  \\
            $\cos$ $\&$ $\sin$ & $(0, 1000]$ & 1    & 164 & 146 & 325 & 903  \\
            \hline
        \end{tabular}
        }
        \label{tab:params_Poisson1D}
    \end{center}
\end{table}

\begin{figure}[htbp]
    \begin{minipage}{0.9\linewidth}
        \centering
        \includegraphics[width=0.8\textwidth]{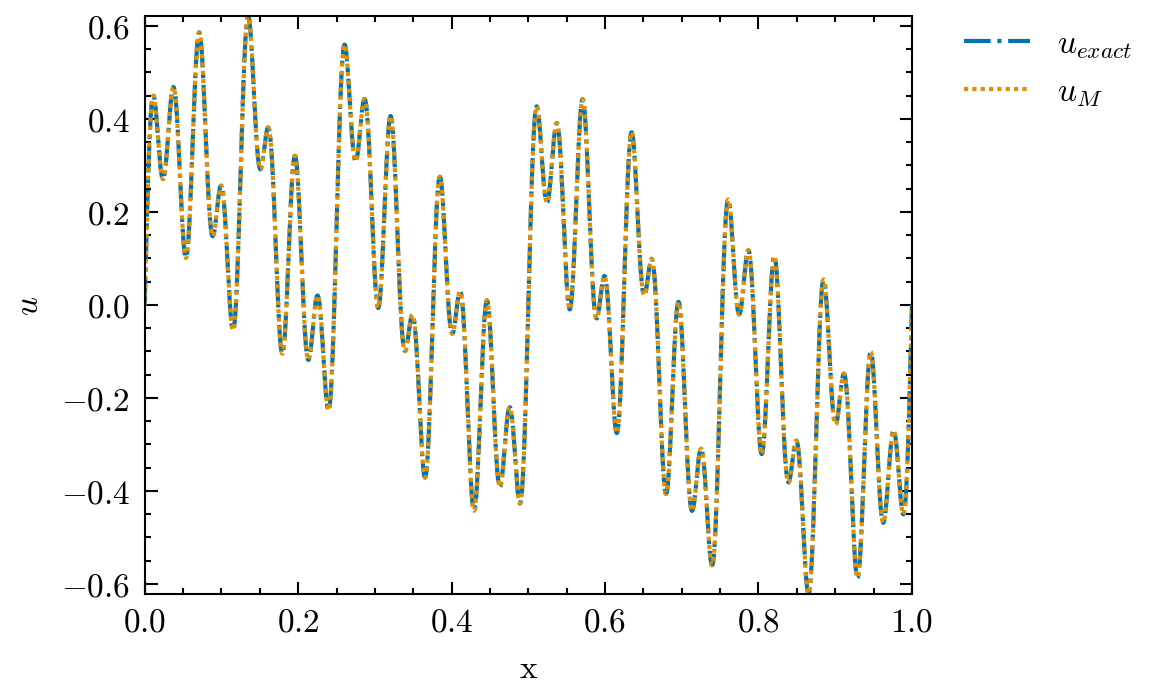}
    \end{minipage}
    \caption{Poisson equation with an oscillating solution: comparison between the exact solution and the approximate solution obtained using FEN with a $\cos$ activation, where $M=900$ and $\rho_{opt}=130$.}
    \label{fig:Poisson1D}
\end{figure}

\begin{figure}[htbp]
    \begin{minipage}{0.9\linewidth}
        \centering
        \includegraphics[width=0.8\textwidth]{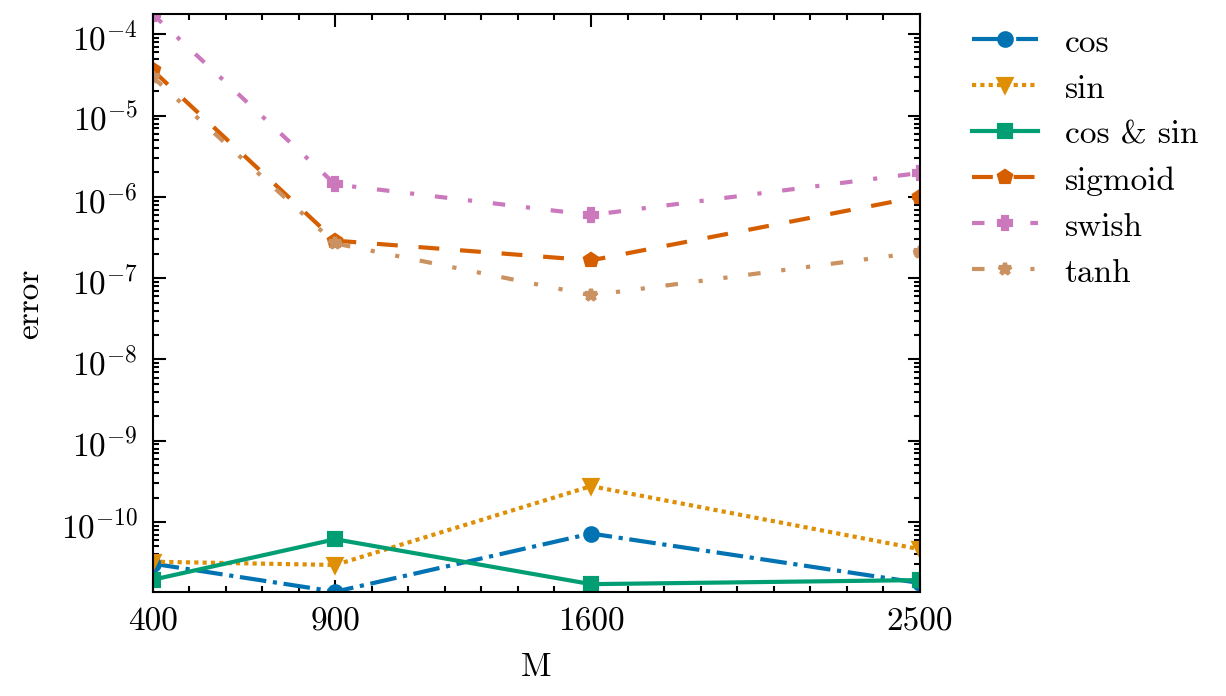}
    \end{minipage}
    \caption{Poisson equation with an oscillating solution: $L_{\infty}$ errors of neural networks when solving the Poisson equation \eqref{eq:Poisson_equation_sharp}.}
    \label{fig:error_Poisson1D}
\end{figure}

\begin{table}[htp]
    \begin{center}
        \caption{Poisson equation with an oscillating solution: Performance comparison of FENs and ELMs activated by $\text{sigmoid}$, $\tanh$ and $\text{swish}$ when solving the Poisson equation \eqref{eq:Poisson_equation_sharp}. The $L_{\infty}$ errors and $L_{2}$ errors for each model configuration are presented.}
        \setlength\tabcolsep{2pt}
        \small{
        \begin{tabular}{ccccccccc}
            \hline\noalign{\smallskip}
            \multirow{2}{*}{Activations} & \multicolumn{2}{c}{M=400} & \multicolumn{2}{c}{M=900} & \multicolumn{2}{c}{M=1600} & \multicolumn{2}{c}{M=2500}  \\
            & $e_{L_{\infty}}$ & $e_{L_{2}}$ & $e_{L_{\infty}}$ & $e_{L_{2}}$ & $e_{L_{\infty}}$ & $e_{L_{2}}$ & $e_{L_{\infty}}$ & $e_{L_{2}}$     \\
            \hline
            $\text{sigmoid}$      & 3.6288E-05 & 3.4848E-05 & 2.9000E-07 & 6.0888E-07 & 1.6587E-07 & 1.8036E-07 & 1.0117E-06 & 1.0275E-06  \\
            $\tanh$         & 3.0062E-05 & 3.8163E-07 & 2.6801E-07 & 2.8331E-07 & 6.1572E-08 & 8.0396E-08 & 2.0924E-07 & 1.7937E-07  \\
            $\text{swish}$        & 1.7748E-04 & 2.4055E-04 & 1.4265E-06 & 1.5708E-06 & 5.9915E-07 & 8.1065E-07 & 1.9759E-06 & 2.1461E-06  \\
            $\cos$          & 3.0724E-11 & 9.7238E-11 & 1.3878E-11 & 3.0395E-11 & 7.2102E-11 & 9.4554E-11 & 1.7749E-11 & 3.2850E-11  \\
            $\sin$          & 3.2458E-11 & 5.7302E-11 & 2.9600E-11 & 4.9456E-11 & 2.7607E-10 & 4.0441E-10 & 4.6233E-11 & 9.1841E-11  \\
            $\cos$ $\&$ $\sin$ & 1.9459E-11 & 3.0421E-11 & 6.1339E-11 & 1.1078E-10 & 1.7186E-11 & 2.8037E-11 & 1.9275E-11 & 3.1528E-11  \\
            \hline
        \end{tabular}
        }
        \label{tab:error_Poisson1D}
    \end{center}
\end{table}

\subsection{Nonlinear Burgers' equation} 
Consider the following 1D Burgers' equation
\begin{equation}
    \label{eq:Nonlinear_Burgers}
    \begin{array}{r@{}l}
        \left\{
        \begin{aligned}
            u_t + u u_x - \epsilon u_{xx} & = f, &  & \mbox{in} \enspace \Omega \times (0, t_f], \\
            u             & = g,        &  & \mbox{on} \enspace \partial \Omega \times (0, t_f], \\
            u & = u_0,  &  & \mbox{in} \enspace \Omega,
        \end{aligned}
        \right.
    \end{array}
\end{equation}
where 
$\Omega=(0, 1)$ and $t_f = 1$. We choose the suitable $f(x,t)$, $g(x,t)$ and $u_0(x)$ so that the exact solution is given by
\begin{equation}
    \label{eq:Nonlinear_Burgers_solution}
    u(t, x) = \frac{1}{1+e^{\frac{x-t}{2\epsilon}}},
\end{equation}
with a small value of $\epsilon=0.01$.

We use $N_x \times N_t = 200 \times 200$ uniform collocation points to train the neural network. Similarly to the one-dimensional nonlinear Helmholtz equation, we employ Picard iteration to solve this problem, implementing $100$ iterations. In Table \ref{tab:params_Burgers}, we show the optimal scaling factors and the relevant parameters used during the search process. In Figure \ref{fig:Burgers2D}, we present heat maps of the exact solution, the approximate solution, and the absolute error between them. The approximate solution is obtained using FEN with a $\sin$ activation, where the number of basis functions is $M = 5000$ and the optimal scaling factor is $\rho_{opt} = 112$. It can be observed that, while the approximate solution captures the details well, the error is noticeably larger compared to other problems due to the solution has large gradient in a local area.

In Figure \ref{fig:error_Burgers}, we present the $L_{\infty}$ error curves. It can be observed that when the number of basis functions $M \leq 2500$, none of the networks approximate the exact solution well. However, when $M = 5000$, FENs outperform ELMs in approximating the exact solution. Although the representational ability of ELMs has improved, the error achieved by FENs is still much lower than that of ELMs, indicating that ELMs are not suitable for solving this problem. The $L_{\infty}$ and $L_2$ errors listed in Table \ref{tab:error_Burgers} further support this conclusion. \chadded{In addition, it can be observed that both FENs and ELMs exhibit a noticeable decrease in accuracy when solving this nonlinear Burgers' equation, which is attributed to the presence of sharp variations in the solution.}

\begin{table}[htp]
    \begin{center}
        \caption{Nonlinear Burgers' equation: Parameters when solving the nonlinear Burgers' equation \eqref{eq:Nonlinear_Burgers}.}
        \setlength\tabcolsep{2pt}
        \small{
        \begin{tabular}{ccccccc}
            \hline\noalign{\smallskip}
            \multirow{2}{*}{Activations} & \multirow{2}{*}{$(\rho_{min}, \rho_{max}]$} & \multirow{2}{*}{$\rho_s$} & \multicolumn{4}{c}{$\rho_{opt}$}  \\
            & &          & $M=900$ & $M=1600$ & $M=2500$ & $M=5000$ \\
            \hline
            $\text{sigmoid}$      & $(0, 10]$  & 0.1 & 6.7 & 8.1 & 5.0 & 9.7  \\
            $\tanh$         & $(0, 10]$  & 0.1 & 6.3 & 5.7 & 3.5 & 5.9  \\
            $\text{swish}$        & $(0, 10]$  & 0.1 & 9.9 & 5.7 & 9.0 & 9.6  \\
            $\cos$          & $(0, 150]$ & 1    & 45 & 62 & 79 & 111  \\
            $\sin$          & $(0, 150]$ & 1    & 44 & 49 & 78 & 112  \\
            $\cos$ $\&$ $\sin$ & $(0, 150]$ & 1    & 50 & 62 & 80 & 114  \\
            \hline
        \end{tabular}
        }
        \label{tab:params_Burgers}
    \end{center}
\end{table}

\begin{figure}[htbp]
    \begin{minipage}{0.9\linewidth}
        \centering
        \includegraphics[width=1\textwidth]{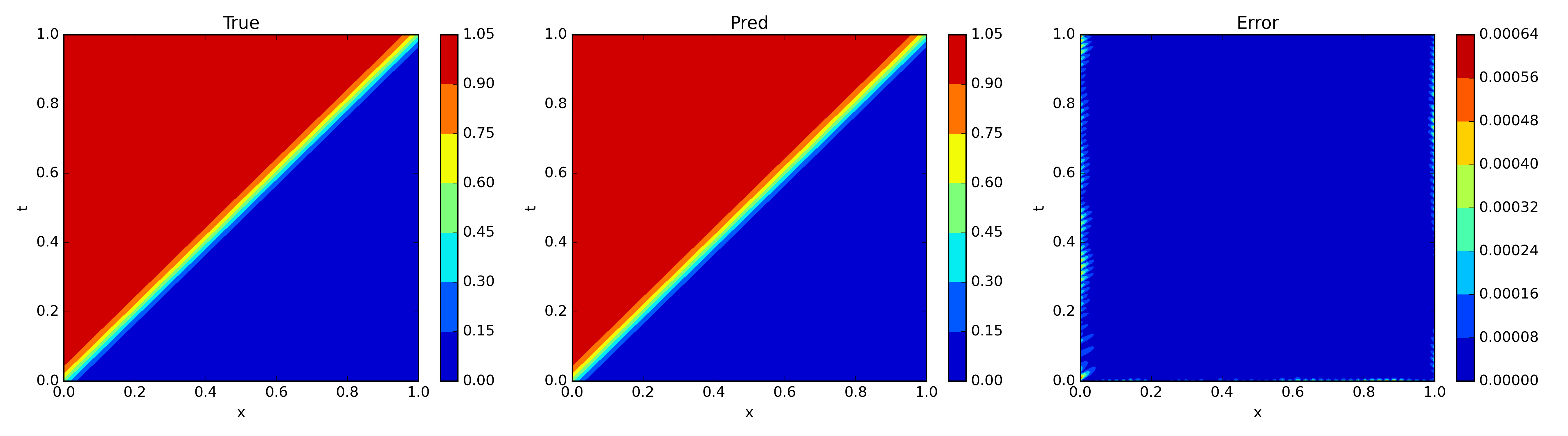}
    \end{minipage}
    \caption{Nonlinear Burgers' equation: the heat maps of the exact solution and the approximate solution of FEN with a $\sin$ activation, where $M=5000$ and $\rho_{opt}=112$.}
    \label{fig:Burgers2D}
\end{figure}

\begin{figure}[htbp]
    \begin{minipage}{0.9\linewidth}
        \centering
        \includegraphics[width=0.8\textwidth]{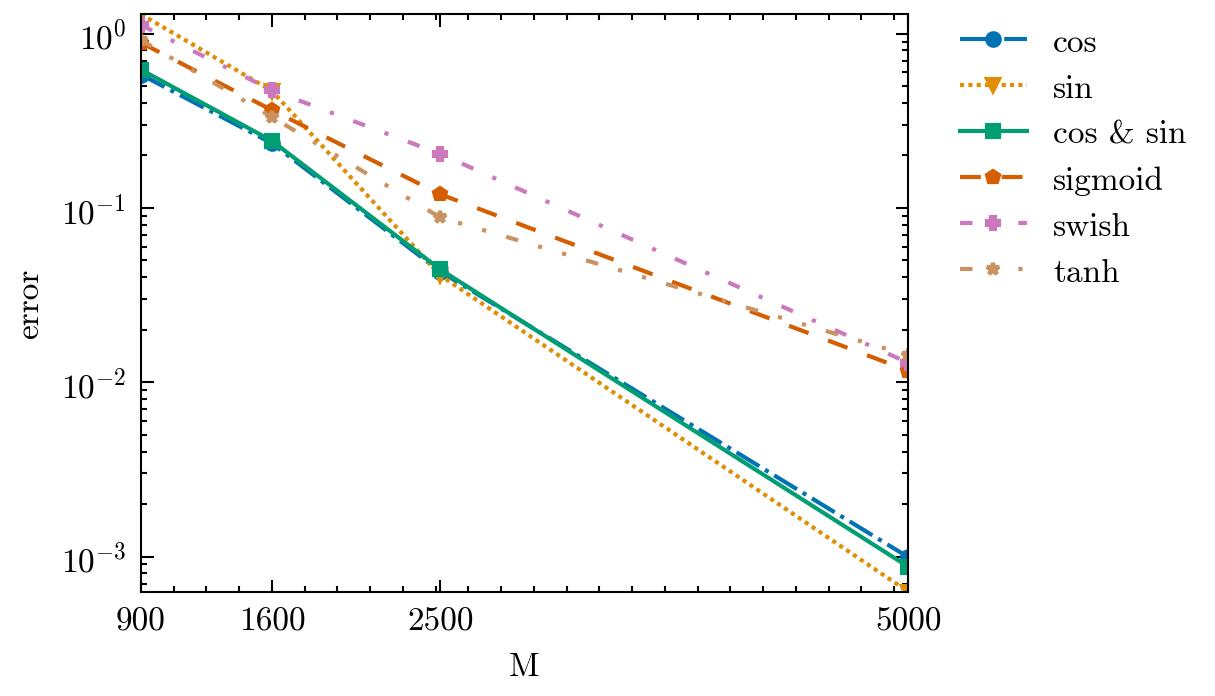}
    \end{minipage}
    \caption{Nonlinear Burgers' equation: $L_{\infty}$ errors of neural networks when solving the nonlinear Burgers' equation \eqref{eq:Nonlinear_Burgers}.}
    \label{fig:error_Burgers}
\end{figure}

\begin{table}[htp]
    \begin{center}
        \caption{Nonlinear Burgers' equation: Performance comparison of FENs and ELMs activated by $\text{sigmoid}$, $\tanh$ and $\text{swish}$ when solving the nonlinear Burgers' equation \eqref{eq:Nonlinear_Burgers}. The $L_{\infty}$ errors and $L_{2}$ errors for each model configuration are presented.}
        \setlength\tabcolsep{2pt}
        \small{
        \begin{tabular}{ccccccccc}
            \hline\noalign{\smallskip}
            \multirow{2}{*}{Activations} & \multicolumn{2}{c}{M=900} & \multicolumn{2}{c}{M=1600} & \multicolumn{2}{c}{M=2500} & \multicolumn{2}{c}{M=5000}  \\
            & $e_{L_{\infty}}$ & $e_{L_{2}}$ & $e_{L_{\infty}}$ & $e_{L_{2}}$ & $e_{L_{\infty}}$ & $e_{L_{2}}$ & $e_{L_{\infty}}$ & $e_{L_{2}}$     \\
            \hline
            $\text{sigmoid}$       & 8.8672E-01 & 2.0842E-01 & 3.6381E-01 & 1.3008E-01 & 1.1966E-01 & 1.6866E-02 & 1.1641E-02 & 1.1815E-03 \\
            $\tanh$          & 9.1862E-01 & 2.0184E-01 & 3.3211E-01 & 1.1212E-01 & 8.8108E-02 & 2.0741E-02 & 1.4296E-02 & 1.9656E-03 \\
            $\text{swish}$         & 1.1182E+00 & 6.4755E-01 & 4.7318E-01 & 9.8497E-02 & 2.0387E-01 & 2.9728E-02 & 1.2937E-02 & 2.9260E-03 \\
            $\cos$           & 5.8175E-01 & 1.4085E-01 & 2.3469E-01 & 5.9826E-02 & 4.3138E-02 & 5.1853E-03 & 9.9221E-04 & 4.8742E-05 \\
            $\sin$           & 1.2913E+00 & 3.7190E-01 & 4.6918E-01 & 1.3188E-01 & 4.1016E-02 & 5.6931E-03 & 6.2835E-04 & 5.4839E-05 \\
            $\cos$ $\&$ $\sin$  & 6.2089E-01 & 3.0694E-01 & 2.4240E-01 & 6.6556E-02 & 4.4349E-02 & 5.9443E-03 & 8.7493E-04 & 6.0509E-05 \\
            \hline
        \end{tabular}
        }
        \label{tab:error_Burgers}
    \end{center}
\end{table}

\subsection{High-dimensional Poisson equation}
We consider the high-dimensional Poisson equation given by Equation~\eqref{eq:HDPoisson_equation}
\begin{equation}
    \label{eq:HDPoisson_equation}
    \begin{array}{r@{}l}
        \left\{
        \begin{aligned}
            -\Delta u & = f(\boldsymbol{x}), &  & \mbox{in} \enspace \Omega,          \\
            u             & = h(\boldsymbol{x}),        &  & \mbox{on} \enspace \partial \Omega,
        \end{aligned}
        \right.
    \end{array}
\end{equation}
where $\Omega = (-1, 1)^d$ represents the spatial domain.
The exact solution to this equation is provided by 
\begin{equation}
    \label{eq:HDPoisson_solution}
    u(\boldsymbol{x}) = \left(\frac{1}{d}\sum_{i=1}^d x_i\right)^2 + \sin\left(\frac{1}{d}\sum_{i=1}^d x_i\right),
\end{equation}
with suitable $f(\boldsymbol{x})$ and boundary condition $h(\boldsymbol{x})$.

For the training of both FENs and ELMs, we randomly select $50{,}000$ collocation points in the domain $\Omega$ and $1000d$ collocation points on the boundary $\partial \Omega$. Unlike in previous examples, in this problem we fix the number of basis functions to $M = 10{,}000$. This is because, for high-dimensional problems, a sufficient number of basis functions is essential to capture the complexity of the solution space. To evaluate the capability of our methods in solving high-dimensional problems, we solve the Poisson equation in dimensions $5$, $7$, $10$, and $15$.

In Table~\ref{tab:params_PoissonHD}, we provide the parameters used for the optimal scale search, along with the optimal scaling factors obtained for problems of various dimensions. Table~\ref{tab:error_PoissonHD} summarizes the $L_{\infty}$ and $L_2$ errors achieved by the FENs and ELMs when using the optimal scaling factors. Figure~\ref{fig:error_PoissonHD} displays the $L_{\infty}$ error curves.

The error distributions ‌reveal critical comparative information about‌ the dimensional scalability of FENs and ELMs. At $d=5$, the errors of FENs are significantly lower than those of ELMs. However, this trend changes as the dimensionality increases. In higher dimensions, the ELM with the $\text{sigmoid}$ activation and the FEN with the $\sin$ activation exhibit comparable precisions.
The ELM with the $\tanh$ activation shows the highest error, indicating its limited ability to achieve high precision. Although the ELM with the $\text{swish}$ activation performs better than those with $\text{sigmoid}$ and $\tanh$ activations at $d=5$, its performance deteriorates in higher dimensions. In these cases, its errors fall between the two and are comparable to those of FENs with the $\cos$ activation, albeit still slightly higher than those of FENs with combined $\cos$ and $\sin$ activations.

From the perspective of specific approximation error values, at $d=5$, FENs achieve the smallest $L_{\infty}$ and $L_2$ errors of $1.2396 \times 10^{-13}$ and $3.9641 \times 10^{-14}$, respectively. In comparison, ELMs reach minimum $L_{\infty}$ and $L_2$ errors of $1.5994 \times 10^{-12}$ and $4.5378 \times 10^{-13}$, respectively. At higher dimensions ($d=7$, $10$, and $15$), both FENs and ELMs achieve $L_{\infty}$ and $L_2$ errors of similar magnitudes: approximately $10^{-9}$ and $10^{-10}$ for $d=7$, $10^{-6}$ and $10^{-7}$ for $d=10$, and $10^{-5}$ for both metrics at $d=15$. These numerical results highlight the performance of FENs and ELMs across different dimensions. While FENs clearly outperform ELMs at lower dimensions in terms of approximation accuracy, the performance gap narrows as dimensionality increases, with both methods achieving comparable levels of precision.

\begin{table}[htp]
    \begin{center}
        \caption{High-dimensional Poisson equation: Parameters when solving the high dimensional Poisson equation \eqref{eq:HDPoisson_equation}.}
        \setlength\tabcolsep{2pt}
        \small{
        \begin{tabular}{ccccccc}
            \hline\noalign{\smallskip}
            \multirow{2}{*}{Activations} & \multirow{2}{*}{$(\rho_{min}, \rho_{max}]$} & \multirow{2}{*}{$\rho_s$} & \multicolumn{4}{c}{$\rho_{opt}$}  \\
            & &          & $d=5$ & $d=7$ & $d=10$ & $d=15$ \\
            \hline
            $\text{sigmoid}$      & $(0, 1]$ & 0.001 & 0.143 & 0.049 & 0.027 & 0.016  \\
            $\tanh$         & $(0, 1]$ & 0.001 & 0.046 & 0.015 & 0.014 & 0.008  \\
            $\text{swish}$        & $(0, 1]$ & 0.001 & 0.12  & 0.033 & 0.035 & 0.024  \\
            $\cos$          & $(0, 1]$ & 0.001 & 0.25  & 0.111 & 0.091 & 0.043  \\
            $\sin$          & $(0, 1]$ & 0.001 & 0.262 & 0.102 & 0.052 & 0.026  \\
            $\cos$ $\&$ $\sin$ & $(0, 1]$ & 0.001 & 0.265 & 0.105 & 0.069 & 0.05  \\
            \hline
        \end{tabular}
        }
        \label{tab:params_PoissonHD}
    \end{center}
\end{table}

\begin{figure}[htbp]
    \begin{minipage}{0.9\linewidth}
        \centering
        \includegraphics[width=0.8\textwidth]{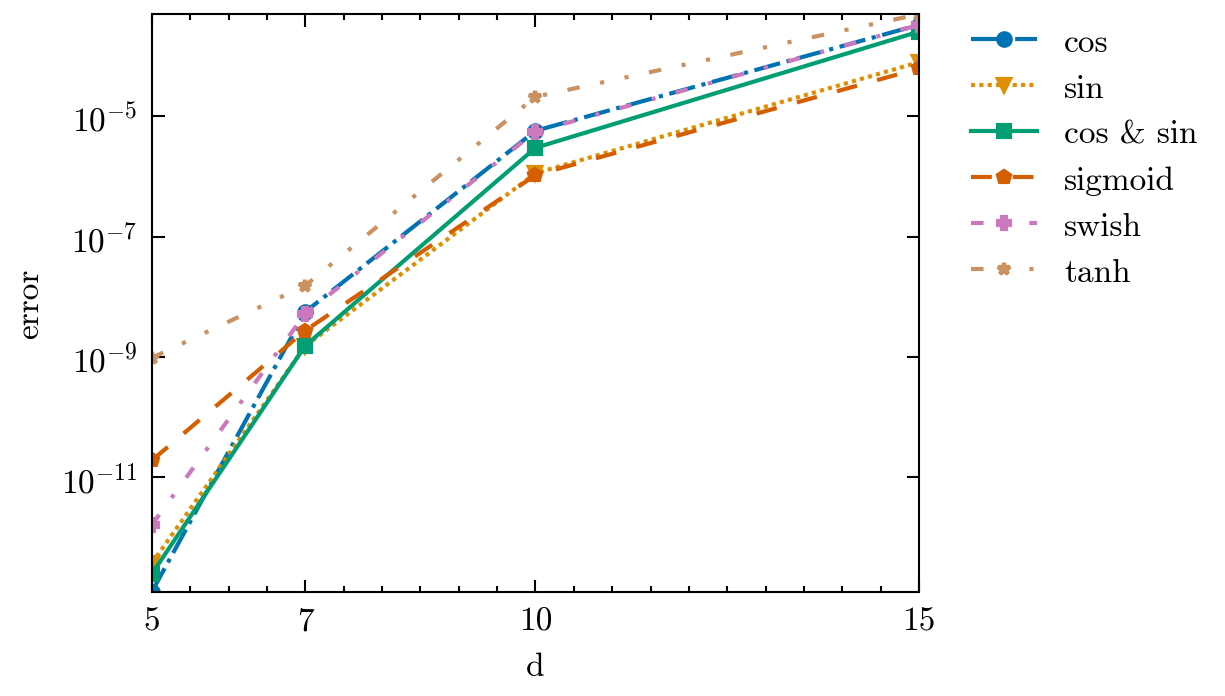}
    \end{minipage}
    \caption{High-dimensional Poisson equation: $L_{\infty}$ errors of neural networks when solving the high dimensional Poisson equation \eqref{eq:HDPoisson_equation}.}
    \label{fig:error_PoissonHD}
\end{figure}

\begin{table}[htp]
    \begin{center}
        \caption{High-dimensional Poisson equation: Performance comparison of FENs and ELMs activated by $\text{sigmoid}$, $\tanh$ and $\text{swish}$ when solving the high dimensional Poisson equation \eqref{eq:nonlinear_Helmholtz_equation}. The $L_{\infty}$ errors and $L_{2}$ errors for each model configuration are presented.}
        \setlength\tabcolsep{2pt}
        \small{
        \begin{tabular}{ccccccccc}
            \hline\noalign{\smallskip}
            \multirow{2}{*}{Activations} & \multicolumn{2}{c}{d=5} & \multicolumn{2}{c}{d=7} & \multicolumn{2}{c}{d=10} & \multicolumn{2}{c}{d=15}  \\
            & $e_{L_{\infty}}$ & $e_{L_{2}}$ & $e_{L_{\infty}}$ & $e_{L_{2}}$ & $e_{L_{\infty}}$ & $e_{L_{2}}$ & $e_{L_{\infty}}$ & $e_{L_{2}}$     \\
            \hline
            $\text{sigmoid}$       & 1.8921E-11 & 8.7116E-12 & 2.7337E-09 & 3.8450E-10 & 1.0511E-06 & 3.1487E-07 & 6.2374E-05 & 4.0279E-05  \\
            $\tanh$          & 9.5997E-10 & 1.0016E-10 & 1.5291E-08 & 1.9115E-09 & 2.1064E-05 & 8.1063E-06 & 4.9823E-04 & 3.1662E-04  \\
            $\text{swish}$         & 1.5994E-12 & 4.5378E-13 & 5.2239E-09 & 1.0120E-09 & 5.5396E-06 & 4.2761E-07 & 3.3137E-04 & 5.9395E-05  \\
            $\cos$           & 1.2396E-13 & 3.9641E-14 & 5.6010E-09 & 7.7473E-10 & 5.7360E-06 & 3.5996E-07 & 3.3140E-04 & 5.9459E-05  \\
            $\sin$           & 3.6643E-13 & 7.9493E-14 & 1.5148E-09 & 3.1943E-10 & 1.1275E-06 & 4.4159E-07 & 7.9356E-05 & 5.2070E-05  \\
            $\cos$ $\&$ $\sin$  & 2.4564E-13 & 5.0408E-14 & 1.5341E-09 & 3.2616E-10 & 2.9906E-06 & 4.9836E-07 & 2.5601E-04 & 3.3998E-05  \\
            \hline
        \end{tabular}
        }
        \label{tab:error_PoissonHD}
    \end{center}
\end{table}

\chadded{
In Table 22, we report the computational time for solving the high-dimensional Poisson equation using FENs and ELMs with $\tanh$, $\text{sigmoid}$, and $\text{swish}$ activation functions. It can be observed that all models exhibit relatively low computational time, primarily because we do not rely on automatic differentiation but instead derive the derivatives of the basis functions analytically. Notably, FENs achieve significantly lower runtime compared to ELMs. This is because the derivatives of the basis functions in FENs with $\cos$/$\sin$ activations are analytically simple, whereas the derivatives of the basis functions in ELMs with $\tanh$, $\text{sigmoid}$, and $\text{swish}$ activations introduce considerable computational complexity. Finally, it should be noted that in the searching for optimal scaling factor, if the range of $\rho$ is too large or the step size $\rho_s$ is too small, the overall computational cost can increase significantly.
}

\begin{table}[htp]
    \begin{center}
        \caption{High-dimensional Poisson equation: Computational time for solving the high dimensional Poisson equation \eqref{eq:HDPoisson_equation}.}
        \setlength\tabcolsep{2pt}
        \small{
        \begin{tabular}{ccccccc}
            \hline\noalign{\smallskip}
            Activations & $d=5$ & $d=7$ & $d=10$ & $d=15$ \\
            \hline
            $\text{sigmoid}$      & 2.3847 & 2.6247 & 3.1373 & 3.7696  \\
            $\tanh$               & 2.2354 & 2.4611 & 2.8802 & 3.5063  \\
            $\text{swish}$        & 2.6865 & 3.1198 & 3.8061 & 4.9203  \\
            $\cos$                & 2.0713 & 2.1767 & 2.5219 & 3.0101  \\
            $\sin$                & 2.0117 & 2.2124 & 2.6537 & 3.1934  \\
            $\cos$ $\&$ $\sin$    & 1.5786 & 1.7345 & 2.0190 & 2.4217  \\
            \hline
        \end{tabular}
        }
        \label{tab:time_PoissonHD}
    \end{center}
\end{table}

\section{Conclusions}
\label{sec:conclusions}
In this work, we propose Fourier Feature Networks (FENs) to study function approximation and the solution of linear and nonlinear PDEs. These networks employ a single-hidden-layer neural network to represent the target function, where the outputs of the hidden layer correspond to a set of basis functions. The linear combination of this set of basis functions yields the representation of the target function. This concept is similar to that of ELMs, which also utilize a single-hidden-layer neural network. However, the key difference is that, in ELMs, affine transformations are indispensable. Without them, high-precision solutions cannot be obtained for certain problems. In contrast, FENs do not require affine transformations. Additionally, while ELMs typically use activation functions such as $\text{sigmoid}$, $\tanh$, and $\text{swish}$, FENs utilize $\cos$, $\sin$, or a combination of both, which naturally introduces Fourier features.

We initialize the input-to-hidden weights and biases by random sampling from a uniform distribution (variance$=1$), which remain fixed during training. Only the output layer's linear combination coefficients require optimization. To fully demonstrate the capability of neural networks in solving problems, the scaling factors for weights and biases are searched within a specified range. The optimal scaling factors, which minimize the error in the algebraic equation, are identified. Problems solved using these optimal scaling factors yield higher precision solutions. In our numerical experiments, both for function approximation and solving linear or nonlinear PDEs, we observe that the solutions obtained by FENs are significantly more accurate than those obtained by ELMs using $\text{sigmoid}$, $\tanh$, or $\text{swish}$ activation functions. In high-dimensional Poisson problems, FENs achieve higher precision solutions even in $5$ dimensions, and for problems with even higher dimensions, FENs can perform comparably to ELMs.

Although the proposed neural networks have shown promising results in numerical experiments, there are still areas that require further research. The neural networks are highly dependent on the choice of scaling factors; poor selection can make it difficult to obtain a high-precision solution. Thus, researching more effective methods for determining the optimal scaling factor is necessary. 
Both FENs and ELMs struggle with high-dimensional problems, so the development of an algorithm capable of handling such problems with greater precision is essential. \chadded{Finally, we consider that analyzing the frequency-domain superiority of trigonometric activation functions over tanh, sigmoid, and swish will constitute an important direction for future work.
}

\section*{Acknowledgment}
This research is partially supported by the National Key R \& D Program of China (No.2022YFE03040002) and the National Natural Science Foundation of China ( No.12371434).

\section*{Data Availability Statement}
The data that support the findings of this study are available from the corresponding author upon reasonable request.

\bibliographystyle{unsrt}
\bibliography{./main}

\end{document}